\newcommand{\iu}{\mathrm{i}\mkern1mu}
\newcommand{\conv}{\text{conv}}
\newtheorem{theorem}{Theorem}
\newtheorem{lemma}{Lemma}
\newtheorem{corollary}{Corollary}
\newtheorem{proposition}{Proposition}
\begin{document}

\title{Characterization of the Variation Spaces Corresponding to Shallow Neural Networks
}

\author{Jonathan W. Siegel \\
  Department of Mathematics\\
  Pennsylvania State University\\
  University Park, PA 16802 \\
  \texttt{jus1949@psu.edu} \\
  \And Jinchao Xu \\
  Department of Mathematics\\
  Pennsylvania State University\\
  University Park, PA 16802 \\
  \texttt{jxx1@psu.edu} \\
}

\maketitle

\begin{abstract}
We study the variation space corresponding to a dictionary of functions in $L^2(\Omega)$ for a bounded domain $\Omega\subset \mathbb{R}^d$. Specifically, we compare the variation space, which is defined in terms of a convex hull with related notions based on integral representations. This allows us to show that three important notions relating to the approximation theory of shallow neural networks, the Barron space, the spectral Barron space, and the Radon BV space, are actually variation spaces with respect to certain natural dictionaries.

\textbf{Keywords}: Function Space, Neural Networks, Approximation

\textbf{MSC Subject Classification}: 68T05, 46B99
\end{abstract}

\section{Introduction}

In this work we consider the variation space with respect to a dictionary $\mathbb{D}\subset H$ in a separable Hilbert space $H$. This notion arises in the study of non-linear approximation by an expansion of dictionary elements \cite{devore1998nonlinear,barron2008approximation}. Suppose that $\sup_{d\in \mathbb{D}} \|d\|_H = K_\mathbb{D} < \infty$, and consider the variation norm \cite{kurkova2001bounds,kurkova2002comparison} of $\mathbb{D}$ defined by
\begin{equation}\label{norm-definition}
 \|f\|_{\mathbb{D}} = \inf\left\{c > 0:~f/c\in \overline{\conv(\pm\mathbb{D})}\right\}
\end{equation}
This is the gauge, or Minkowski functional of the closed symmetric convex hull of $\mathbb{D}$
\begin{equation}\label{unit-ball-definition}
 \overline{\conv(\pm\mathbb{D})} = \overline{\left\{\sum_{j=1}^n a_jh_j:~n\in \mathbb{N},~h_j\in \mathbb{D},~\sum_{i=1}^n|a_i|\leq 1\right\}}.
\end{equation}
The variation space $\mathcal{K}(\mathbb{D})$ is then given by 
\begin{equation}\label{space-definition}
\mathcal{K}(\mathbb{D}) := \{f\in H:~\|f\|_{\mathbb{D}} < \infty\}.
\end{equation}

The varation norm and variation space have been introduced in different forms in the literature and play an important role in the approximation theory of neural networks \cite{barron1993universal,makovoz1996random,makovoz1998uniform,klusowski2018approximation,siegel2021sharp}, the convergence theory of greedy algorithms \cite{barron2008approximation,devore1996some,temlyakov2008greedy,temlyakov2011greedy,sil2004rate,livshits2009lower}
and in non-linear approximation
 \cite{devore1998nonlinear,kurkova2001bounds,kurkova2002comparison}. %

In this work, we begin by developing the basic properties of the variation space and variation norm. Specifically, we show that the set $\mathcal{K}(\mathbb{D})$ is a Banach space with the $\|\cdot\|_\mathbb{D}$-norm. Next, we study the variation space $\mathcal{K}(\mathbb{D})$ for the following two dictionaries which arise in the study of shallow neural networks and compare them with related notions in the approximation theory of shallow neural networks.

The first type of dictionary arises when studying neural networks with ReLU$^k$ activation function \cite{siegel2021sharp} $$\sigma_k(x) = \text{ReLU}^k(x) := [\max(0,x)]^k.$$ Here when $k=0$, we interpret $\sigma_k(x)$ to be the Heaviside function. Let $\Omega \subset \mathbb{R}^d$ be a compact domain and consider the dictionary
\begin{equation}\label{relu-k-space-definition}
 \mathbb{P}_k = \{\sigma_k(\omega\cdot x + b):~\omega\in S^{d-1},~b\in [c_1,c_2]\}\subset L^2(\Omega),
\end{equation}
where $S^{d-1} = \{\omega\in \mathbb{R}^d:~|\omega| = 1\}$ is the unit sphere and $c_1$ and $c_2$ are chosen to satisfy 
\begin{equation}
 c_1 < \inf \{x\cdot \omega:x\in \Omega, \omega\in S^{d-1}\} < \sup\{x\cdot \omega:x\in \Omega, \omega\in S^{d-1}\}< c_2.
\end{equation}
The important point is that $\sigma_k(\omega\cdot x + b)$ for all planes $\omega\cdot x + b = 0$ which intersect $\Omega$ must be strictly contained in $\mathbb{P}_k$. Note that we are suppressing the dependence on the domain $\Omega$ and dimension $d$ for notational convenience. We explain where this definition comes from in Section \ref{relu-barron-section}. 

Recently, neural networks with ReLU activation function have shown remarkable empirical success on problems in computer vision and natural language processing \cite{lecun2015deep}. A shallow neural network of width $n$ with ReLU activation function is a function of the form
\begin{equation}
 f_n(x) = \sum_{i=1}^n a_i\sigma_1(\omega_i \cdot x + b_i),
\end{equation}
for some parameters $a_i,b_i\in \mathbb{R}$ and $\omega_i\in \mathbb{R}^d$, where $\sigma_1(x) = \max(0,x)$ is the rectified linear unit \cite{nair2010rectified}. A natural measure of complexity on the parameters $a_i,b_i,\omega_i$ is the squared $\ell^2$-norm
\begin{equation}
 C(f_n) := C(\{a_i,b_i,\omega_i\}_{i=1}^n) := \sum_{i=1}^n a_i^2 + \|\omega_i\|_2^2,
\end{equation}
which corresponds to the regularizer induced by the common practice of weight decay \cite{krogh1991simple}. In \cite{ongie2019function}, a semi-norm is defined by taking the complexity required to uniformly approximate $f$ on compact subsets as the width $n\rightarrow \infty$. Specifically, they define the semi-norm
\begin{equation}
 \bar{R}(f) = \lim_{\epsilon\rightarrow 0}\inf\left\{C(f_n)~\text{s.t.}~|f_n(x) - f(x)|\leq \epsilon,~\text{for}~|x|\leq \epsilon^{-1}\right\}.
\end{equation}
Functions for which $\bar{R}(f)$ is finite can be approximated arbitrarily closely by shallow ReLU neural networks with bounded complexity. It is shown in \cite{ongie2019function} that this semi-norm is given by
\begin{equation}
 \bar{R}(f) = \min\left\{\|\alpha\|_1,~\text{s.t.}~f(x) = \int_{S^{d-1}\times \mathbb{R}} [\sigma_1(\omega\cdot x + b) - \sigma_1(b)]d\alpha(\omega,b) + c\right\},
\end{equation}
where the infemum is taken over all signed Borel measures $\alpha$ and constants $c$, and $\|\alpha\|_1$ denotes the total variation norm of $\alpha$.  
Further, they provide a characterization of the semi-norm $\bar{R}$ in terms of the Radon transform, showing that (roughly speaking, see \cite{ongie2019function}, Theorem 2)
\begin{equation}\label{radon-equation-1}
 \bar{R}(f) = \gamma_d\|\mathcal{R}(\Delta^{\frac{d+1}{2}}f)\|_1 + |\nabla f(\infty)|,
\end{equation}
where $\mathcal{R}$ is the Radon transform, $\gamma_d$ is a dimension dependent constant, and the gradient at $\infty$ is defined by $\nabla f(\infty) = \lim_{r\rightarrow\infty} \frac{1}{r^{d-1}|S^{d-1}|}\int_{|x| = r} \nabla f(x)dx$. Note that one part of this equality, namely that the left hand side is less than or equal to the right, was also proved in \cite{petrosyan2020neural}. When $d$ is even, the fractional power of the Laplacian appearing in \eqref{radon-equation-1} must be defined in terms of the ramp filter in the Radon domain (see \cite{ongie2019function,parhi2020banach} for details).

This notion is extended the higher powers of the ReLU, i.e. to $\sigma_k = [\max(0,x)]^k$ for $k\geq 2$ in \cite{parhi2021kinds,parhi2020banach} (the case $k=0$ was treated in \cite{kainen2010integral}). They propose a family of seminorms, called the Radon BV semi-norms, denoted by $|f|_{(k)}$, indexed by $k$ and defined via the Radon transform $\mathcal{R}$ (again, roughly speaking, see \cite{parhi2020banach} for details) as
\begin{equation}
 |f|_{(m)} = \gamma_d\|\mathcal{R}(\Delta^{\frac{d+2k-1}{2}}f)\|_1.
\end{equation}
Further, they prove a representer theorem for this semi-norm, i.e. they show that minimizers to the regularized problem
\begin{equation}
 \arg\min_f \sum_{i=1}^N\ell(f(x_i),y_i) + \gamma|f|_{(m)},
\end{equation}
where $(x_1,y_1),...,(x_N,y_N)$ is a finite data sample are shallow neural networks with ReLU$^k$ activation function and finite width (depending on $N$).

A closely related notion introduced recently concerning approximation by shallow ReLU neural networks is the Barron norm \cite{ma2019barron,weinan2021barron}, defined by
\begin{equation}
 \|f\|_{\mathcal{B}} = \min\left\{\mathbb{E}_\rho(|a|(|\omega|_1 + |b|)):~f(x) = \int_{\mathbb{R}\times\mathbb{R}^d\times\mathbb{R}} a\sigma_1(\omega\cdot x + b)\rho(da,d\omega,db)\right\},
\end{equation}
where the infemum is over all probability measures $\rho$ on $\mathbb{R}\times\mathbb{R}^d\times\mathbb{R}$. The properties of this norm have also been studied in \cite{wojtowytsch2020representation}, for instance.

Our contribution is to show that on a bounded domain $\Omega$, the notion of Barron norm coincides with the variation norm of the dictionary $\mathbb{P}_1$. Specifically, we show the equivalence
\begin{equation}
 \|f\|_{\mathbb{P}_1} \eqsim \|f\|_{\mathcal{B},\Omega} := \inf_{f_e|\Omega = f}\|f_e\|_{\mathcal{B}}.
\end{equation}
Here the constant in the above bound sclaes with the square root of the dimension and is due to the fact that the Barron norm measures the norm of $\omega$ in $\ell^1$ while we measure it in $\ell^2$.

In addition, we show that up to a polynomial kernel, the Radon BV semi-norm coincides with the variation norm of $\mathbb{P}_k$ on a bounded domain $\Omega$.
Specifically, we have for $k\geq 0$
\begin{equation}\label{quotient-equivalent-1}
 \inf_{p\in \mathcal{P}_k}\|f + p\|_{\mathbb{P}_k} \eqsim |f|_{(k+1),\Omega} := \inf_{f_e|\Omega = f}|f_e|_{(k+1)},
\end{equation}
where $\mathcal{P}_k$ is the space of polynomials of degree at most $k$ and the infimum is taken over all extensions $f_e$ of $f$ to the whole of $\mathbb{R}^d$.

By equivalence of the $\bar{R}$ semi-norm and Radon BV semi-norm noted in \cite{parhi2020banach} this also implies that
\begin{equation}\label{quotient-equivalent-2}
 \inf_{p\in \mathcal{P}_k}\|f + p\|_{\mathbb{P}_1} \eqsim \inf_{f_e|\Omega = f}\bar{R}(f_e).
\end{equation}
The constants implicit in the equivalences \eqref{quotient-equivalent-1} and \eqref{quotient-equivalent-2} do not depend upon the dimension. We also prove the equivalences
\begin{equation}
    \|f\|_{\mathbb{P}_k} \eqsim |f|_{(k+1),\Omega} + \|f\|_{L^2(\Omega)},~\|f\|_{\mathbb{P}_1} \eqsim \inf_{f_e|\Omega = f}\bar{R}(f) + \|f\|_{L^2(\Omega)}.
\end{equation}
However, for these the implied constant does depend upon the dimension.

Finally, we also give a complete characterization of the variation space corresponding to $\mathbb{P}_k$ in one dimension. In particular, we prove that
\begin{equation}
  \|f\|_{\mathbb{P}_k} \eqsim \sum_{j=0}^{k-1} |f^{(j)}(-1)| + \|f^{(k)}\|_{BV([-1,1])}. 
 \end{equation}

The second type of dictionary related to shallow neural networks  which we consider is the spectral dictionary of order $s \geq 0$, given by
\begin{equation}
 \mathbb{F}_s = \{(1+|\omega|)^{-s}e^{2\pi \iu \omega\cdot x}:~\omega\in \mathbb{R}^d\}\subset L^2(\Omega).
\end{equation}
Our contribution is to show that the variation space of $\mathbb{F}_s$ can be completely characterized in terms of the Fourier transform. In particular, in Section \ref{spectral-barron-section} we prove that
\begin{equation}\label{barron-integral-condition}
 \|f\|_{\mathbb{F}_s} = \inf_{f_e|_{\Omega}= f} \int_{\mathbb{R}^d} (1+|\xi|)^s|\hat{f}_e(\xi)|d\xi,
\end{equation}
where the infimum is taken over all extensions $f_e\in L^1(\mathbb{R}^d)$. (Note that we have here equality, not just equivalence.) 

The norm in \eqref{barron-integral-condition}  was first introduced by Barron \cite{barron1993universal}, who showed that functions in the space $\mathcal{K}(\mathbb{F}_1)$ could be approximated with rate $O(n^{-\frac{1}{2}})$ using shallow networks with sigmoidal activation function.
These results have been extended to networks with ReLU$^k$ activation functions in \cite{klusowski2018approximation,CiCP-28-1707}. The spectral Barron norm \eqref{barron-integral-condition} has also been important in understanding the approximation properties of shallow neural networks with more general activation functions \cite{siegel2020approximation,hornik1994degree}. Our contribution is to show that the spectral Barron norm \eqref{barron-integral-condition} is equivalent to the variation norm with respect to a suitable dictionary of decaying Fourier modes.

The paper is organized as follows. In Section \ref{basic-properties-section} we discuss the basic properties of the variation spaces. In particular, show that they are Banach spaces. In Section \ref{relu-barron-section}, we analyze the spaces $\mathcal{K}(\mathbb{P}_k)$. We show that when $k=1$, the space is equivalent to the Barron space studied in \cite{ma2019barron} and also compare them with the Radon BV spaces. In Section \ref{one-dimensional-characterization}, we give a characterization of $\mathcal{K}(\mathbb{P}_k)$ when $d=1$ in terms of the space of bounded variation. Then, in Section \ref{spectral-barron-section}, we give a characterization of $\mathcal{K}_1(\mathbb{F}^d_s)$ in terms of the Fourier transform, showing that it is equivalent to the spectral Barron norm. Finally, we give some concluding remarks and further research directions.

\section{Basic Properties of $\mathcal{K}_1(\mathbb{D})$}\label{basic-properties-section}
Let us first develop the elementary properties of the variation space $\mathcal{K}(\mathbb{D})$. The key result is that $\mathcal{K}(\mathbb{D})$ is a Banach space with the variation norm $\|\cdot\|_{\mathbb{D}}$.
\begin{lemma}\label{fundamental-norm-lemma}
 Suppose that $\sup_{d\in \mathbb{D}} \|d\|_H = K_\mathbb{D} < \infty$. Then the $\mathcal{K}_1(\mathbb{D})$ norm satisfies the following properties.
 \begin{itemize}
  \item $\overline{\conv(\pm\mathbb{D})} = \{f\in H:\|f\|_{\mathcal{K}_1(\mathbb{D})}\leq 1\}$
  \item $\|f\|_H\leq K_\mathbb{D}\|f\|_{\mathcal{K}_1(\mathbb{D})}$
  \item $\mathcal{K}_1(\mathbb{D}) := \{f\in H:~\|f\|_{\mathcal{K}_1(\mathbb{D})} < \infty\}$ is a Banach space with the $\|\cdot\|_{\mathcal{K}_1(\mathbb{D})}$ norm
 \end{itemize}
 
\end{lemma}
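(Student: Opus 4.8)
The plan is to treat the three bullets in order of increasing difficulty, with all of them ultimately resting on two structural facts about the set $B := \overline{\conv(\pm\mathbb{D})}$: it is convex, symmetric and closed by construction, and since $\sup_{d\in\mathbb{D}}\|d\|_H = K_\mathbb{D}$ it is contained in the closed $H$-ball of radius $K_\mathbb{D}$. I would first record that $\|\cdot\|_{\mathbb{D}}$ is genuinely a norm. Positive homogeneity is immediate from the definition of the gauge together with the symmetry $B = -B$; the triangle inequality follows from convexity, since if $f/c_1, g/c_2 \in B$ then writing $(f+g)/(c_1+c_2)$ as the convex combination $\frac{c_1}{c_1+c_2}(f/c_1) + \frac{c_2}{c_1+c_2}(g/c_2)$ places it in $B$, whence $\|f+g\|_{\mathbb{D}} \le c_1 + c_2$ and one takes infima over admissible $c_1,c_2$.

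The embedding bound $\|f\|_H \le K_\mathbb{D}\|f\|_{\mathbb{D}}$ (the second bullet) comes directly from $B \subseteq \{g : \|g\|_H \le K_\mathbb{D}\}$: for any $c > \|f\|_{\mathbb{D}}$ we have $\|f/c\|_H \le K_\mathbb{D}$, and letting $c \downarrow \|f\|_{\mathbb{D}}$ gives the claim. Definiteness of the norm then follows, since $\|f\|_{\mathbb{D}} = 0$ forces $\|f\|_H = 0$ and hence $f = 0$. For the first bullet, that $B$ is exactly the unit ball of the gauge, the inclusion $B \subseteq \{\|f\|_{\mathbb{D}} \le 1\}$ is trivial. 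For the reverse, suppose $\|f\|_{\mathbb{D}} \le 1$ and pick $c_n \downarrow \|f\|_{\mathbb{D}}$ with $f/c_n \in B$. The embedding bound shows $f/c_n \to f/\|f\|_{\mathbb{D}}$ in $H$ (the degenerate case $\|f\|_{\mathbb{D}} = 0$ forcing $f = 0 \in B$ directly), and closedness of $B$ gives $f/\|f\|_{\mathbb{D}} \in B$; since $\|f\|_{\mathbb{D}} \le 1$ and $B$ is convex and contains the origin, scaling back gives $f \in B$. This is the first place where closedness of the convex hull is essential.

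Finally, for completeness I would invoke the standard criterion that a normed space is complete as soon as every absolutely convergent series converges. Given $(f_n)$ with $\sum_n \|f_n\|_{\mathbb{D}} < \infty$, the embedding bound yields $\sum_n \|f_n\|_H < \infty$, so the partial sums $S_N = \sum_{n=1}^N f_n$ converge in the complete space $H$ to some limit $S$. To upgrade this to convergence in $\|\cdot\|_{\mathbb{D}}$, fix $N$ and set $r_N := \sum_{n>N}\|f_n\|_{\mathbb{D}}$. By the triangle inequality each finite tail $\sum_{n=N+1}^{M} f_n$ lies in $r_N B$, and as $M \to \infty$ these tails converge in $H$ to $S - S_N$; since $r_N B$ is closed in $H$, the limit satisfies $S - S_N \in r_N B$, i.e.\ $\|S - S_N\|_{\mathbb{D}} \le r_N \to 0$. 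Hence $S_N \to S$ in $\mathcal{K}_1(\mathbb{D})$ and $S \in \mathcal{K}_1(\mathbb{D})$.

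I expect the main point requiring care to be this last transfer of an $H$-convergent limit into a bound on the variation norm, which again hinges on the closedness of $\overline{\conv(\pm\mathbb{D})}$ in $H$ and amounts to a lower semicontinuity property of the gauge. Everything else is a routine verification that the Minkowski functional of a closed, bounded, symmetric convex set behaves as a norm; the boundedness (through $K_\mathbb{D}$) and closedness of the dictionary's symmetric convex hull are precisely the hypotheses that make these routine steps go through.
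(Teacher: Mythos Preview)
Your argument is correct. The paper cites the first two bullets as known and proves completeness by a direct Cauchy-sequence argument: given a $\|\cdot\|_{\mathbb{D}}$-Cauchy sequence, one passes to the $H$-limit $f$ and then, using that $(\epsilon/2)\,\overline{\conv(\pm\mathbb{D})}$ is closed in $H$, traps $f - f_N$ inside it to conclude $\|f - f_N\|_{\mathbb{D}} \le \epsilon/2$. You instead invoke the absolutely-summable-series criterion and trap the tails $S - S_N$ in $r_N\,\overline{\conv(\pm\mathbb{D})}$ by the same closedness argument. The two routes are equivalent standard techniques and rest on exactly the same structural point you identified --- lower semicontinuity of the gauge coming from closedness of $\overline{\conv(\pm\mathbb{D})}$ in $H$ --- so there is no substantive difference; your version has the mild advantage of also spelling out the first two bullets and the norm axioms explicitly rather than deferring to the literature.
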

\begin{proof}
 The first two statements are well-known and can be found for instance in \cite{kurkova2001bounds}.
 For the third statement we must show that the set $\mathcal{K}_1(\mathbb{D})$ is complete with respect to the $\|\cdot\|_{\mathcal{K}_1(\mathbb{D})}$ norm.
 
 Let $\{f_n\}_{n=1}^\infty$ be a Cauchy sequence with respect to the $\|\cdot\|_{\mathcal{K}_1(\mathbb{D})}$ norm. From the second statement, we have $\|f_n - f_m\|_H\leq K_\mathbb{D}\|f_n - f_m\|_{\mathcal{K}_1(\mathbb{D})}$, so that the sequence is Cauchy with respect the the $H$-norm as well. Thus, there exists an $f\in H$, such that $f_n\rightarrow f$ in $H$, i.e. such that $\|f_n - f\|_H\rightarrow 0$.
 
 We will show that also $\|f_n - f\|_\mathbb{D}\rightarrow 0$, i.e. that we have convergence in the variation norm as well (note that this automatically implies that $\|f\|_{\mathbb{D}}<\infty$). 
 
 To this end, let $\epsilon > 0$ and choose $N$ such that $\|f_n - f_m\|_{\mathbb{D}} < \epsilon / 2$ for $n,m \geq N$ ($\{f_n\}$ is Cauchy, so this is possible). In particular, this means that $\|f_N - f_m\|_{\mathbb{D}}\leq \epsilon / 2$ for all $m > N$. Now the first statement implies that $f_m - f_N \in (\epsilon / 2)\overline{\conv(\pm\mathbb{D})}$, or in other words that $f_m \in f_N + (\epsilon / 2)\overline{\conv(\pm\mathbb{D})}$. Since $f_m\rightarrow f$ in $H$, and $\overline{\conv(\pm\mathbb{D})}$ is closed in $H$ by definition, we get $f\in f_N + (\epsilon / 2)\overline{\conv(\pm\mathbb{D})}$. Hence $\|f - f_N\|_{\mathbb{D}} \leq \epsilon / 2$ and the triangle inequality finally implies that $\|f - f_m\|_{\mathbb{D}} \leq \epsilon$ for all $m \geq N$. Thus $f_n\rightarrow f$ in the variation norm and $\mathcal{K}(\mathbb{D})$ is complete.
\end{proof}

Let us remark that for some dictionaries $\mathbb{D}$ the $\mathcal{K}(\mathbb{D})$ space can be substantially smaller than $H$. In fact, if the dictionary $\mathbb{D}$ is contained in a closed subspace of $H$, then we have the following elementary result.
\begin{lemma}\label{subspace-lemma}
 Let $K\subset H$ be a closed subspace of $H$. Then $\mathbb{D}\subset K$ iff $\mathcal{K}_1(\mathbb{D})\subset K$.
\end{lemma}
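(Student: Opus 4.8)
The plan is to prove both implications directly from the unit-ball characterization established in the first bullet of Lemma~\ref{fundamental-norm-lemma}, namely that $\overline{\conv(\pm\mathbb{D})} = \{f \in H : \|f\|_{\mathcal{K}_1(\mathbb{D})} \leq 1\}$, together with the fact that a closed subspace is exactly a set stable under the two operations (forming finite linear combinations and taking limits) used to build the closed symmetric convex hull.

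For the implication $\mathcal{K}_1(\mathbb{D}) \subset K \Rightarrow \mathbb{D} \subset K$, I would first observe that every dictionary element lies in its own variation space. Indeed, for $d \in \mathbb{D}$, taking $n = 1$, $h_1 = d$, $a_1 = 1$ in the definition \eqref{unit-ball-definition} shows $d \in \overline{\conv(\pm\mathbb{D})}$, hence $\|d\|_{\mathcal{K}_1(\mathbb{D})} \leq 1 < \infty$ and $d \in \mathcal{K}_1(\mathbb{D})$. Therefore $\mathbb{D} \subset \mathcal{K}_1(\mathbb{D}) \subset K$ immediately.

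For the converse $\mathbb{D} \subset K \Rightarrow \mathcal{K}_1(\mathbb{D}) \subset K$, the key observation is that a closed subspace absorbs finite linear combinations and their limits. Assuming $\mathbb{D} \subset K$, any element $\sum_{j=1}^n a_j h_j$ with $h_j \in \mathbb{D}$ lies in $K$ because $K$ is a linear subspace; thus $\conv(\pm\mathbb{D}) \subset K$, and since $K$ is closed, taking closures in $H$ yields $\overline{\conv(\pm\mathbb{D})} \subset K$. Finally, for any nonzero $f \in \mathcal{K}_1(\mathbb{D})$ the definition of the variation norm gives $f / \|f\|_{\mathcal{K}_1(\mathbb{D})} \in \overline{\conv(\pm\mathbb{D})} \subset K$, and since $K$ is closed under scalar multiplication we conclude $f \in K$; the case $f = 0$ is trivial. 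Hence $\mathcal{K}_1(\mathbb{D}) \subset K$.

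I expect no serious obstacle here, as the statement is essentially a reformulation of how the variation norm is constructed. The only point requiring mild care is the final scaling step, which relies both on the unit-ball characterization from Lemma~\ref{fundamental-norm-lemma} and on the stability of $K$ under scalar multiplication, so that membership of $f/\|f\|_{\mathcal{K}_1(\mathbb{D})}$ in $K$ propagates back to $f$ itself.
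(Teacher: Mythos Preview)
Your proof is correct and follows essentially the same approach as the paper: the reverse implication uses $\mathbb{D}\subset\mathcal{K}_1(\mathbb{D})$, and the forward implication uses that a closed subspace contains $\overline{\conv(\pm\mathbb{D})}$ and is stable under scaling. The only cosmetic difference is that the paper phrases the final step as $\mathcal{K}_1(\mathbb{D}) = \bigcup_{r>0} r\cdot\overline{\conv(\pm\mathbb{D})}\subset K$ rather than normalizing an individual $f$, but this is the same argument.
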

\begin{proof}
 We have $\mathbb{D}\subset\mathcal{K}(\mathbb{D})$ so that the reverse implication is trivial. For the forward implication, since $\mathbb{D}\subset K$ and $K$ is closed, it follows that $\overline{\conv(\pm\mathbb{D})}\subset K$. Then, from the definition \eqref{norm-definition}, it follows that
 \begin{equation}
  \mathcal{K}(\mathbb{D}) = \bigcup_{r > 0} r\cdot \overline{\conv(\pm\mathbb{D})}\subset K.
 \end{equation}

\end{proof}
 A simple example when this occurs is when considering a shallow neural network with activation function $\sigma$ which is a polynomial of degree $k$. In this case the space $\mathcal{K}(\mathbb{D})$ is contained in the finite-dimensional space of polynomials of degree $k$, and the $\|\cdot\|_{\mathbb{D}}$ norm is infinite on non-polynomial functions. This is related to the well-known result that neural network functions are dense iff the activation function is not a polynomial \cite{leshno1993multilayer}. 
\begin{proposition}
 Let $\Omega\subset \mathbb{R}^d$ be a bounded domain and $\mathbb{D} = \{\sigma(\omega\cdot x + b):(\omega,b)\in \mathbb{R}^d\times \mathbb{R}\}\subset L^2(\Omega)$, where the activation function $\sigma\in L^\infty_{loc}(\mathbb{R})$, i.e. $\|\sigma\|_{L^\infty(K)} < \infty$ for any compact set $K\subset \mathbb{R}$. Suppose further that the set of discontinuities of $\sigma$ has Lebesgue measure $0$. Then $\mathcal{K}(\mathbb{D})$ is finite dimensional iff $\sigma$ is a polynomial (a.e.).
\end{proposition}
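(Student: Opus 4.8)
The plan is to recast the statement as a question about the linear span of $\mathbb{D}$. Observe first that any finite combination $\sum_i a_i\sigma(\omega_i\cdot x + b_i)$ has variation norm at most $\sum_i|a_i| < \infty$ directly from \eqref{norm-definition}, so $\text{span}(\mathbb{D})\subseteq \mathcal{K}(\mathbb{D})$, while trivially $\mathbb{D}\subseteq\mathcal{K}(\mathbb{D})$. Consequently $\mathcal{K}(\mathbb{D})$ is finite dimensional if and only if $\text{span}(\mathbb{D})$ is, and the task reduces to showing that $\text{span}(\mathbb{D})$ is finite dimensional iff $\sigma$ is a polynomial almost everywhere. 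The hypotheses that $\sigma\in L^\infty_{loc}$ and that its discontinuity set has measure zero are what guarantee that each $\sigma(\omega\cdot x + b)$ is a well-defined element of $L^2(\Omega)$ and behaves well under the integral limits used below.

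The forward (easy) direction: if $\sigma$ is a polynomial of degree $k$, then every ridge function $\sigma(\omega\cdot x + b)$ is a polynomial in $x$ of degree at most $k$, so $\mathbb{D}$ is contained in the finite-dimensional, hence closed, subspace $\mathcal{P}_k\subset L^2(\Omega)$. Lemma \ref{subspace-lemma} then yields $\mathcal{K}(\mathbb{D})\subseteq \mathcal{P}_k$, which is finite dimensional.

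For the converse I would argue the contrapositive: assuming $\sigma$ is not a polynomial, I will show $\overline{\text{span}(\mathbb{D})}$ contains all powers $x_1^j$, $j\geq 0$. The first step is mollification: for $\phi\in C_c^\infty(\mathbb{R})$, writing $(\sigma*\phi)(\omega\cdot x + b) = \int \sigma(\omega\cdot x + t)\phi(b-t)\,dt$ as a limit of Riemann sums in $t$, each of which is a finite combination of elements of $\mathbb{D}$, and using local boundedness together with a.e.\ continuity of $\sigma$, dominated convergence gives $L^2(\Omega)$ convergence, so every smoothed ridge function $(\sigma*\phi)(\omega\cdot x + b)$ lies in the closed set $\overline{\text{span}(\mathbb{D})}$. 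The crucial input, and the step I expect to be the main obstacle, is the classical fact (essentially contained in the density theorem of Leshno et al., see \cite{leshno1993multilayer}) that a function which is not a polynomial a.e.\ admits some $\phi\in C_c^\infty$ for which $g:=\sigma*\phi$ is smooth and still not a polynomial; establishing this requires ruling out the possibility that all mollifications are polynomials, which one does by showing their degrees would then be uniformly bounded, forcing $\sigma = \lim_\epsilon g_\epsilon$ to be polynomial.

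Given such a smooth non-polynomial $g$, realized as a ridge function in $\overline{\text{span}(\mathbb{D})}$, I would extract monomials by differentiating in a scale parameter. Restricting to directions $\omega = se_1$, the map $s\mapsto g(sx_1 + b)$ is smooth with $\partial_s^j g(sx_1+b)\big|_{s=0} = x_1^j g^{(j)}(b)$; since $\Omega$ is bounded, the corresponding $j$-th order finite differences converge uniformly, hence in $L^2(\Omega)$, so $x_1^j g^{(j)}(b)\in\overline{\text{span}(\mathbb{D})}$. Because $g$ is not a polynomial, $g^{(j)}\not\equiv 0$ for every $j$, so picking $b$ with $g^{(j)}(b)\neq 0$ gives $x_1^j\in\overline{\text{span}(\mathbb{D})}$ for all $j\geq 0$. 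These monomials are linearly independent on the domain $\Omega$ (which has nonempty interior), so $\overline{\text{span}(\mathbb{D})}$ is infinite dimensional. If $\text{span}(\mathbb{D})$ were finite dimensional it would be closed and thus equal to its closure, a contradiction; hence $\text{span}(\mathbb{D})\subseteq\mathcal{K}(\mathbb{D})$ is infinite dimensional, completing the contrapositive.
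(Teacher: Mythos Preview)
Your proof is correct. The forward direction matches the paper's exactly (invoking Lemma~\ref{subspace-lemma}). For the converse, the paper simply cites the full density theorem of Leshno--Lin--Pinkus--Schocken \cite{leshno1993multilayer} as a black box, obtaining $C(\Omega)\subset\overline{\text{span}(\mathbb{D})}$ in $L^\infty$ and hence density of $\mathcal{K}(\mathbb{D})$ in $L^2(\Omega)$, which is more than enough for infinite-dimensionality. You instead unpack the two key ideas behind that theorem---mollification to reduce to a smooth non-polynomial $g$, then differentiation in the scale parameter $s$ to extract all monomials $x_1^j$---and cite \cite{leshno1993multilayer} only for the intermediate fact that some mollification stays non-polynomial. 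What you gain is a more self-contained argument that makes visible exactly which structural feature (non-vanishing of all derivatives $g^{(j)}$) forces infinite-dimensionality; what the paper's route buys is brevity and the stronger conclusion of full density. Both arguments ultimately rest on the same circle of ideas from \cite{leshno1993multilayer}, so the difference is one of packaging rather than substance.
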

\begin{proof}
 If $\sigma$ is a polynomial, $\mathbb{D}$ is contained in the space of polynomials of degree at most $\text{deg}(\sigma)$, which is finite dimensional. This implies the result by Lemma \ref{subspace-lemma}. For the reverse implication, we use Theorem 1 of \cite{leshno1993multilayer}, which states that if $\sigma$ is not a polynomial, then 
 $$
 C(\Omega) \subset \overline{\left\{\sum_{i=1}^na_i\sigma(\omega_i\cdot x + b_i)\right\}},
 $$
 where the closure is taken in $L^\infty(\Omega)$ (note that this cumbersome statement is necessary since $\sigma$ may not be continuous). This immediately implies that $\mathcal{K}_1(\mathbb{D})$ is dense in $L^2(\Omega)$ (since $C(\Omega)$ is dense in $L^2(\Omega)$), and thus obviously not finite dimensional.
\end{proof}
While in this example the variation norm is finite dimensional, this is typically not the case for dictionaries of interest. Specifically for the dictionaries $\mathbb{P}_k$ and $\mathbb{F}_s$ which we study, this space is infinite dimensional but still much smaller than $H$. The size of the variation space has been precisely quantified for $\mathbb{P}_k$ in terms of the metric entropy in \cite{siegel2021sharp} and these spaces have been used as trial spaces for solving PDEs in \cite{hao2021efficient}. However, it remains an interesting open question what the practical utility of $\mathcal{K}(\mathbb{P}_k)$ and $\mathcal{K}(\mathbb{F}_s)$ really are.

The significance of the variation norm is that functions $f\in \mathcal{K}(\mathbb{D})$ can be efficiently approximated by convex combinations of small numbers of dictionary elements. In particular, we have the following result of Maurey \cite{pisier1981remarques,jones1992simple,barron1993universal}. Denote
\begin{equation}
 \Sigma_{n,M}(\mathbb{D}) = \left\{\sum_{j=1}^n a_jh_j:~h_j\in \mathbb{D},~\sum_{i=1}^n|a_i|\leq M\right\}.
\end{equation}
Then we have the following result.
\begin{theorem}[Lemma 1 in \cite{barron1993universal}]
 Suppose that $f\in \mathcal{K}(\mathbb{D})$. Then for $M = \|f\|_{\mathbb{D}}$ we have
\begin{equation}
 \inf_{f_n\in \Sigma_{n,M}(\mathbb{D})} \|f - f_n\|_H \leq K_\mathbb{D}\|f\|_{\mathcal{K}(\mathbb{D})}n^{-\frac{1}{2}},
\end{equation}

\end{theorem}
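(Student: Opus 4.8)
The plan is to prove this via the probabilistic sampling argument of Maurey, which reduces the statement to a variance computation for a suitable random combination of dictionary elements. The one structural issue to handle carefully is that $f$ lies only in the \emph{closure} of the symmetric convex hull, so the sampling cannot be applied to $f$ itself.

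First I would reduce to finite convex combinations. Since $M = \|f\|_{\mathbb{D}}$, the first statement of Lemma~\ref{fundamental-norm-lemma} gives $f/M \in \overline{\conv(\pm\mathbb{D})}$. Hence for any $\delta > 0$ there is a finite combination $g = \sum_{i=1}^m a_i h_i$ with $h_i \in \mathbb{D}$ and $\sum_i |a_i| \leq 1$ such that $\|f - Mg\|_H < \delta$. Writing $c_i = Ma_i$ and $S = \sum_i |c_i| \leq M$, the new target is the explicit finite sum $g' = Mg = \sum_i c_i h_i$, which lies within $\delta$ of $f$ in the $H$-norm.

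Next comes the sampling step. I would introduce a random element $Y$ of $H$ taking the value $S\,\sign(c_i)\,h_i$ with probability $|c_i|/S$, so that $\mathbb{E}[Y] = \sum_i c_i h_i = g'$ while each realization satisfies $\|Y\|_H \leq S K_\mathbb{D} \leq M K_\mathbb{D}$. Drawing $n$ independent copies $Y_1,\dots,Y_n$ and setting $f_n = \tfrac1n \sum_{j=1}^n Y_j$, each $f_n$ is a sum of at most $n$ dictionary elements whose coefficients have absolute sum exactly $S \leq M$, so $f_n \in \Sigma_{n,M}(\mathbb{D})$. Because the $Y_j$ are i.i.d.\ with mean $g'$, the cross terms vanish and
$$\mathbb{E}\,\|f_n - g'\|_H^2 = \frac1n\left(\mathbb{E}\,\|Y\|_H^2 - \|g'\|_H^2\right) \leq \frac{M^2 K_\mathbb{D}^2}{n}.$$

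Finally, since the expected squared error is at most $M^2 K_\mathbb{D}^2/n$, at least one realization must achieve error no larger than this expectation, yielding a concrete $f_n \in \Sigma_{n,M}(\mathbb{D})$ with $\|f_n - g'\|_H \leq M K_\mathbb{D} n^{-1/2}$. The triangle inequality then gives $\|f - f_n\|_H < \delta + M K_\mathbb{D} n^{-1/2}$, so the infimum over $\Sigma_{n,M}(\mathbb{D})$ is bounded by $\delta + M K_\mathbb{D} n^{-1/2}$ for every $\delta > 0$; letting $\delta \to 0$ and recalling $M = \|f\|_{\mathcal{K}(\mathbb{D})}$ gives the claimed estimate. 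The main obstacle is precisely the first step: the approximation against $g'$ rather than $f$, with the residual $\delta$ absorbed at the end, is what makes the probabilistic argument applicable despite $f$ living only in the closure.
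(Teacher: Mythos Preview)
The paper does not actually supply a proof of this theorem; it is stated with attribution to Lemma~1 in \cite{barron1993universal} and no argument is given. Your proposal is the standard Maurey probabilistic sampling argument (as in \cite{pisier1981remarques,barron1993universal}), and it is correct: the reduction to a finite convex combination via the closure, the i.i.d.\ sampling with the variance bound, and the final $\delta\to 0$ step are all handled properly, so there is nothing to compare against and nothing to fix.
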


We note the following simply converse to Maurey's approximation rate. In particular, if a function can be approximated by elements from $\Sigma_{n,M}(\mathbb{D})$ with fixed $M$, then it must be in the space $\mathcal{K}(\mathbb{D})$.
\begin{proposition}
 Let $H$ be a Hilbert space and $f\in H$. Suppose that $f_n\rightarrow f$ in $H$ with $f_n\in \Sigma_{n,M}(\mathbb{D})$ for a fixed $M < \infty$. Then $f\in \mathcal{K}(\mathbb{D})$ and
 \begin{equation}
  \|f\|_{\mathbb{D}} \leq M.
 \end{equation}

\end{proposition}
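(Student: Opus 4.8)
The plan is to reduce the statement to the closedness of the dilated unit ball $M\cdot\overline{\conv(\pm\mathbb{D})}$ combined with the first bullet of Lemma \ref{fundamental-norm-lemma}. First I would observe that the condition $f_n\in\Sigma_{n,M}(\mathbb{D})$ says exactly that $f_n=\sum_{j=1}^n a_jh_j$ with $h_j\in\mathbb{D}$ and $\sum_{j=1}^n|a_j|\leq M$. Rescaling the coefficients via $a_j\mapsto a_j/M$ and comparing with the definition \eqref{unit-ball-definition} shows that $f_n/M\in\conv(\pm\mathbb{D})\subseteq\overline{\conv(\pm\mathbb{D})}$, i.e. $f_n\in M\cdot\overline{\conv(\pm\mathbb{D})}$ for every $n$. (Equivalently, $\bigcup_n\Sigma_{n,M}(\mathbb{D})=M\cdot\conv(\pm\mathbb{D})$.)

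Next I would use that $\overline{\conv(\pm\mathbb{D})}$ is closed in $H$ by definition, and that dilation by a fixed scalar is a homeomorphism of $H$, so the set $M\cdot\overline{\conv(\pm\mathbb{D})}$ is again closed (the degenerate case $M=0$ forcing $f_n\equiv 0$ and hence $f=0$ is handled trivially). Since $f_n\to f$ in $H$ and each $f_n$ lies in this closed set, the limit must satisfy $f\in M\cdot\overline{\conv(\pm\mathbb{D})}$, which is to say $f/M\in\overline{\conv(\pm\mathbb{D})}$.

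Finally, I would invoke the first bullet of Lemma \ref{fundamental-norm-lemma}, which identifies $\overline{\conv(\pm\mathbb{D})}$ with the closed unit ball $\{g\in H:\|g\|_{\mathbb{D}}\leq 1\}$. Thus $f/M\in\overline{\conv(\pm\mathbb{D})}$ is precisely the statement $\|f\|_{\mathbb{D}}\leq M$. In particular $\|f\|_{\mathbb{D}}<\infty$, so by \eqref{space-definition} we conclude $f\in\mathcal{K}(\mathbb{D})$, as desired.

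I do not expect any genuine obstacle here: the entire content is the elementary principle that a norm-convergent sequence remaining inside a fixed closed dilate of the unit ball has its limit in that same dilate. The only step requiring a moment of care is the bookkeeping in the first paragraph, namely recognizing $\Sigma_{n,M}(\mathbb{D})$ as a subset of $M\cdot\conv(\pm\mathbb{D})$ through the rescaling of coefficients in \eqref{unit-ball-definition}; everything else is a direct application of the already-established properties of $\overline{\conv(\pm\mathbb{D})}$.
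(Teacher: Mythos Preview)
Your proof is correct and follows essentially the same approach as the paper: both arguments use the inclusion $\Sigma_{n,M}(\mathbb{D})\subset M\cdot\overline{\conv(\pm\mathbb{D})}$ together with the closedness of $\overline{\conv(\pm\mathbb{D})}$ in $H$ to conclude that the limit lies in the dilated ball. The only cosmetic difference is that the paper first normalizes to $M=1$, whereas you work directly with general $M$ via the homeomorphism observation.
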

\begin{proof}
 It is clear that we must only prove this for $M=1$. From the definitions we have $\Sigma_{n,1}(\mathbb{D}) \subset \overline{\conv(\pm\mathbb{D})}$ for every $n$. Thus $f_n\in \overline{\conv(\pm\mathbb{D})}$ and since $\overline{\conv(\pm\mathbb{D})}$ is closed, we get $f\in \overline{\conv(\pm\mathbb{D})}$, so that $\|f\|_{\mathbb{D}} \leq 1$, as desired.
\end{proof}

Next, we wish to connect the space $\mathcal{K}(\mathbb{D})$ defined via the closed symmetric convex hull of $\mathbb{D}$ to integral representations, which have recently become a popular concept in the approximation theory of shallow neural networks \cite{weinan2019barron,wojtowytsch2020representation,ongie2019function,parhi2020banach}. An integral representation of a function $f$ over the dictionary $\mathbb{D}$ is given by
\begin{equation}\label{integral-representation-def}
    f = \int_\mathbb{D} i_{\mathbb{D}\rightarrow H}d\mu.
\end{equation}
Here the dictionary $\mathbb{D}$ inherits the subspace topology from the Hilbert space $H$, $d\mu$ is a (signed) Borel measure with finite variation on $\mathbb{D}$, i.e.
\begin{equation}
    \|\mu\| = \sup_{\substack{g:~\mathbb{D}\rightarrow [-1,1]\\g~\text{measurable}}} \int_\mathbb{D} gd\mu < \infty,
\end{equation}
and the integral is the Bochner integral of the inclusion map $i_{\mathbb{D}\rightarrow H}:\mathbb{D}\rightarrow H$. Note that since $H$ is separable, the inclusion map is $\mu$-measurable by the Pettis measurability theorem. Further, if $\mathbb{D}$ is bounded, i.e. if $|\mathbb{D}| = \sup_{d\in \mathbb{D}} \|d\|_H < \infty$, then since $\mu$ has finite variation the inclusion map $i_{\mathbb{D}\rightarrow H}$ is absolutely integrable and so the Bochner integral exists (see \cite{diestel2012sequences}, Chapter 4).

We prove that if the dictionary $\mathbb{D}$ is compact, then membership in $\mathcal{K}(\mathbb{D})$ is equivalent to the existence of an integral representation.
\begin{lemma}\label{prokhorov-lemma}
 Suppose that $\mathbb{D}\subset H$ is compact. Then $f\in \mathcal{K}(\mathbb{D})$ iff there exists a Borel measure $\mu$ on $\mathbb{D}$
 \begin{equation}
  f = \int_\mathbb{D} i_{\mathbb{D}\rightarrow H}d\mu.
 \end{equation}
 Moreover,
 \begin{equation}
  \|f\|_{\mathbb{D}} = \inf\left\{\|\mu\|:~f = \int_\mathbb{D} i_{\mathbb{D}\rightarrow H}d\mu\right\}.
 \end{equation}

\end{lemma}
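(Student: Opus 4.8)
The plan is to prove the two directions separately; together they give both the equivalence and the identity of norms. Write $M = \|f\|_{\mathbb{D}}$ throughout. The direction
$$\|f\|_{\mathbb{D}} \le \inf\left\{\|\mu\|:~f = \int_\mathbb{D} i_{\mathbb{D}\rightarrow H}d\mu\right\}$$
does not use compactness, so I would dispatch it first. Given any representing measure $\mu$, I would use its polar decomposition, writing $d\mu = h\,d|\mu|$ with $|h| = 1$ holding $|\mu|$-a.e., so that $f = \int_\mathbb{D} h(d)\,i_{\mathbb{D}\rightarrow H}(d)\,d|\mu|(d)$. The Bochner integral is by construction an $H$-limit of integrals of simple functions, each of which is a finite linear combination $\sum_j a_j d_j$ with $d_j \in \mathbb{D}$ and $\sum_j|a_j| \le |\mu|(\mathbb{D}) = \|\mu\|$, the sign of $h$ being absorbed into that of the coefficients (so the $d_j$ may be taken in $\pm\mathbb{D}$). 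Hence $f$ lies in the $H$-closure of $\|\mu\|\,\conv(\pm\mathbb{D})$, i.e. $f \in \|\mu\|\,\overline{\conv(\pm\mathbb{D})}$, which gives $\|f\|_{\mathbb{D}} \le \|\mu\|$ by Lemma \ref{fundamental-norm-lemma}; taking the infimum over $\mu$ finishes this direction.

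For the reverse direction I would invoke compactness of $\mathbb{D}$ and Prokhorov's theorem. By Lemma \ref{fundamental-norm-lemma}, $f \in M\,\overline{\conv(\pm\mathbb{D})}$, so there are $f_n \to f$ in $H$ with $f_n = \sum_j a_j^{(n)}d_j^{(n)}$, $d_j^{(n)} \in \mathbb{D}$, and $\sum_j|a_j^{(n)}| \le M$. Each $f_n$ is represented by the atomic measure $\mu_n = \sum_j a_j^{(n)}\delta_{d_j^{(n)}}$, which satisfies $\|\mu_n\| \le M$ and $\int_\mathbb{D} i_{\mathbb{D}\rightarrow H}d\mu_n = f_n$. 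Since $\mathbb{D}$ is a compact metric space, the positive and negative parts $\mu_n^\pm$ form bounded and automatically tight families of positive measures, so Prokhorov's theorem lets me extract a subsequence along which $\mu_{n_k}^\pm$ converge weakly to limits $\mu^\pm$. Setting $\mu = \mu^+ - \mu^-$ and testing against the constant function $1$ on the compact set $\mathbb{D}$ gives $\mu^\pm(\mathbb{D}) = \lim_k \mu_{n_k}^\pm(\mathbb{D})$, whence $\|\mu\| \le \mu^+(\mathbb{D}) + \mu^-(\mathbb{D}) = \lim_k \|\mu_{n_k}\| \le M$.

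The crux, and the step I expect to be the main obstacle, is passing the vector-valued integrals to the limit, i.e. showing $\int_\mathbb{D} i_{\mathbb{D}\rightarrow H}d\mu_{n_k} \to \int_\mathbb{D} i_{\mathbb{D}\rightarrow H}d\mu$. Weak convergence of measures directly controls only integrals of scalar continuous test functions, so I would test the vector integral against an arbitrary $v \in H$: the map $d \mapsto \langle i_{\mathbb{D}\rightarrow H}(d), v\rangle$ is continuous and bounded on $\mathbb{D}$ by continuity of the inclusion, and the Bochner integral commutes with the functional $v$, so $\langle \int_\mathbb{D} i_{\mathbb{D}\rightarrow H}d\mu_{n_k}, v\rangle \to \langle \int_\mathbb{D} i_{\mathbb{D}\rightarrow H}d\mu, v\rangle$. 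This shows $\int_\mathbb{D} i_{\mathbb{D}\rightarrow H}d\mu_{n_k} \rightharpoonup \int_\mathbb{D} i_{\mathbb{D}\rightarrow H}d\mu$ weakly in $H$; but the left-hand side equals $f_{n_k} \to f$ strongly, hence weakly, and uniqueness of weak limits forces $\int_\mathbb{D} i_{\mathbb{D}\rightarrow H}d\mu = f$. Together with $\|\mu\| \le M$, this produces a representing measure with $\|\mu\| \le \|f\|_{\mathbb{D}}$, completing the equivalence and showing that the infimum in the norm identity is in fact attained.
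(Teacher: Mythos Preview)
Your proposal is correct and follows essentially the same route as the paper: approximate the Bochner integral by simple functions to get $M(\mathbb{D})\subset\overline{\conv(\pm\mathbb{D})}$, and use Prokhorov's theorem together with testing against elements of $H$ to show the reverse inclusion and closedness. The one point the paper treats more carefully is the step you label ``by construction'': arranging that the approximating simple functions take their values in $\mathbb{D}$ (not merely in $H$) so that their integrals genuinely lie in $\|\mu\|\,\conv(\pm\mathbb{D})$; this is exactly the technical issue the paper flags and resolves explicitly, but your sketch is otherwise sound.
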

\begin{proof}
 From the definition of the variation norm \eqref{norm-definition} we must show that 
 $$\overline{\conv(\pm\mathbb{D})} = M(\mathbb{D}):=\left\{\int_\mathbb{D} i_{\mathbb{D}\rightarrow H}d\mu:~\|\mu\| \leq 1\right\}.$$
 We first show that $M(\mathbb{D})\subset \overline{\conv(\pm\mathbb{D})}$. The idea of the proof is to approximate the inclusion map $i_{\mathbb{D}\rightarrow H}$ by a simple function. The only technical issue is that we must be able to restrict the range of this simple function to lie in $\mathbb{D}$. We proceed as in the proof of Bochner's theorem (see \cite{diestel2012sequences}, Chapter IV) with minor modification.
 
 Let $\mu$ be a Borel measure on $\mathbb{D}$ with  variation $\|\mu\|\leq 1$. Since $H$ is separable, the Pettis measurability theorem implies that the inclusion $i_{\mathbb{D}\rightarrow H}$ is $\mu$-measurable. So for each $n$ we can choose a countably valued $\mu$-measurable function $f_n$ such that $\|f_n - i_{\mathbb{D}\rightarrow H}\|_H\leq 1/2n$ $\mu$-almost everywhere. Thus we can write
 \begin{equation}
     f_n = \sum_{k=1}^\infty a_{n,k}\chi_{E_{n,k}}
 \end{equation}
 for elements $a_{n,k}\in H$ and $\mu$-measurable sets $E_{n,k}$ which satisfy $E_{n,i}\cap E_{n,j} = \emptyset$ when $i\neq j$. The condition $\|f_n - i_{\mathbb{D}\rightarrow H}\|_H\leq 1/2n$ means that for every $d\in E_{n,k}\subset \mathbb{D}$ we have $\|a_{n,k} - d\|_H\leq 1/2n$. Using the triangle inequality this means that for any $d,d'\in E_{n,k}$, we have $\|d - d'\|_H \leq 1/n$. Now for each $E_{n,k}$ we choose $d_{n,k}\in E_{n,k}$ and set
 \begin{equation}
     \tilde{f}_n = \sum_{k=1}^\infty d_{n,k}\chi_{E_{n,k}}.
 \end{equation}
 Then we have $\|\tilde{f}_n-i_{\mathbb{D}\rightarrow H}\|_H \leq 1/n$ $\mu$-almost everywhere and the range of $\tilde{f}_n$ lies in $H$. Finally, for each $n$ we choose $p_n$ such that
 \begin{equation}
     \int_{\left(\cup_{k=p_n+1}^\infty E_{n,k}\right)} \|\tilde{f}_n\|_H d\mu \leq \frac{1}{n}.
 \end{equation}
 Since $\mathbb{D}$ is compact and thus bounded and $\mu$ satisfies $\|\mu\|\leq 1$, the function $\|\tilde{f}_n\|_H$ is in $L^1(d\mu)$ so that such a $p_n$ can always be chosen. We now set
 \begin{equation}
     g_n = \sum_{k=1}^{p_n} d_{n,k}\chi_{E_{n,k}}.
 \end{equation}
 Then $g_n$ is a simple function satisfying $\int_{\mathbb{D}} \|i_{\mathbb{D}\rightarrow H} - g_n\|_Hd\mu \leq \frac{2}{n}$. This means that
 \begin{equation}
     \left|\int_\mathbb{D} i_{\mathbb{D}\rightarrow H}d\mu - \sum_{k=1}^{p_n}d_{n,k}\mu(E_{n,k})\right| \leq \frac{2}{n}.
 \end{equation}
 By design, $d_{n,k}\in \mathbb{D}$ and since $\|\mu\|\leq 1$, we get $\sum_{k=1}^{p_n}|\mu(E_{n,k})| \leq 1$. Thus
 \begin{equation}
     \sum_{k=1}^{p_n}d_{n,k}\mu(E_{n,k})\in \overline{\conv(\pm\mathbb{D})}
 \end{equation}
 for every $n$. Letting $n\rightarrow \infty$, we see that
 \begin{equation}
     \int_\mathbb{D} i_{\mathbb{D}\rightarrow H}d\mu\in \overline{\conv(\pm\mathbb{D})}.
 \end{equation}
 Since $\mu$ was an arbitrary measure we get $M(\mathbb{D})\subset \overline{\conv(\pm\mathbb{D})}$.
 
 Next we prove the reverse inclusion. Given any convex combination 
 $$f = \sum_{i=1}^Na_id_i,$$
 with $d_i\in \mathbb{D}$ and $\sum_{i=1}^N|a_i| \leq 1$, we can choose $\mu = \sum_{i=1}^\infty a_i\delta_{d_i}$ to be a linear combination of Dirac deltas to see that $f\in M(\mathbb{D})$. To complete the proof we must show that $M(\mathbb{D})$ is closed. We will prove this using Prokhorov's theorem \cite{prokhorov1956convergence} (see also \cite{dudley2018real}, Theorem 11.5.4, for instance). Let $f_n\rightarrow f$ with $f_n\in M(\mathbb{D})$ and let $\mu_n$ be the corresponding sequence of Borel measures on $\mathbb{D}$ such that
 \begin{equation}
     f_n = \int_\mathbb{D} i_{\mathbb{D}\rightarrow H}d\mu_n
 \end{equation}
 and $\|\mu_n\|\leq 1$. By the compactness of $\mathbb{D}$ and Prokhorov's theorem, by taking a subsequence if necessary we may assume that the $\mu_n\rightarrow \mu$ weakly, i.e. that the integrals against continuous functions on $\mathbb{D}$ converges. Set $\tilde{f} = \int_\mathbb{D} i_{\mathbb{D}\rightarrow H}d\mu$, which is Bochner integrable by the comments prior to the lemma. Choose a countable dense sequence $\{\lambda_i\}_{i=1}^\infty\in H$. The weak convergence implies that
 \begin{equation}
     \lim_{n\rightarrow \infty}\left\langle \lambda_i,f_n\right\rangle_H = \left\langle \lambda_i,\tilde{f}\right\rangle_H.
 \end{equation}
 for every $i$. The strong convergence $f_n\rightarrow f$ implies the same with $f$ replacing $\tilde{f}$. Thus $\langle \lambda_i, f\rangle_H = \langle \lambda_i, \tilde{f}\rangle_H$ for all $i$. Hence $f = \tilde{f}\in M(\mathbb{D})$, as desired.
\end{proof}
Finally, we note that the compactness in the preceding theorem was necessary. Indeed, we have the following simple example.
\begin{proposition}
 Suppose that $\Omega\subset \mathbb{R}^d$ is bounded and $\sigma$ is a smooth sigmoidal function. Let $H = L^2(\Omega)$ and $\mathbb{D}_\sigma$ defined by
 \begin{equation}
     \mathbb{D}_\sigma = \{\sigma(\omega\cdot x + b),~\omega\in \mathbb{R}^d,~b\in \mathbb{R}\}.
 \end{equation}
 Then $\overline{\conv(\pm\mathbb{D}_\sigma)} \supsetneq M(\mathbb{D}_\sigma)$, where $M(\mathbb{D}_\sigma)$ is defined as in the proof of the previous lemma.
\end{proposition}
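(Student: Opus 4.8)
The plan is to exhibit an explicit function lying in $\overline{\conv(\pm\mathbb{D}_\sigma)}$ but not in $M(\mathbb{D}_\sigma)$, exploiting that $\mathbb{D}_\sigma$ fails to be compact precisely because $\sigma$ saturates as its argument tends to $\pm\infty$. Concretely, fix $\omega_0\in S^{d-1}$ and $b_0\in\mathbb{R}$ so that the hyperplane $\{x:\omega_0\cdot x+b_0=0\}$ meets the interior of $\Omega$, and let $g$ be the step function obtained as the $L^2(\Omega)$-limit of $\sigma(t(\omega_0\cdot x+b_0))$ as $t\to+\infty$. Since $\sigma$ is bounded and converges pointwise (off the hyperplane, a null set) to its two saturation values, dominated convergence shows this limit exists in $L^2(\Omega)$ and equals a function with a genuine jump of size $\sigma(+\infty)-\sigma(-\infty)\neq 0$ across the hyperplane. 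Each $\sigma(t(\omega_0\cdot x+b_0))$ belongs to $\mathbb{D}_\sigma$, so $g\in\overline{\mathbb{D}_\sigma}\subset\overline{\conv(\pm\mathbb{D}_\sigma)}$. The crucial feature of $g$ is that its $L^2(\Omega)$-equivalence class contains no continuous representative: a continuous $F$ agreeing with $g$ almost everywhere would be forced, by approaching an interior point of the hyperplane from each side, to take both saturation values there.

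It then remains to show that, by contrast, every element of $M(\mathbb{D}_\sigma)$ does admit a continuous representative, which immediately yields $g\notin M(\mathbb{D}_\sigma)$ and hence the strict inclusion. The idea is that an integral representation is a $\mu$-superposition of the uniformly bounded, parameter-continuous family $(\omega,b)\mapsto\sigma(\omega\cdot x+b)$, and such superpositions inherit continuity. Given $f=\int_{\mathbb{D}_\sigma}i_{\mathbb{D}_\sigma\to H}\,d\mu$ with $\|\mu\|<\infty$, I would first transfer $\mu$ to a finite signed measure $\nu$ on the parameter space $\mathbb{R}^d\times\mathbb{R}$ (using that $(\omega,b)\mapsto\sigma(\omega\cdot x+b)$ is a continuous surjection onto $\mathbb{D}_\sigma$, so that a measurable right inverse exists), and then define the pointwise function $F(x)=\int_{\mathbb{R}^d\times\mathbb{R}}\sigma(\omega\cdot x+b)\,d\nu(\omega,b)$. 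Because $|\sigma|\leq\|\sigma\|_{L^\infty(\mathbb{R})}\in L^1(d|\nu|)$ and $\sigma$ is continuous, dominated convergence makes $F$ continuous in $x$; testing against arbitrary $\psi\in L^2(\Omega)$ and using that the Bochner integral commutes with the bounded functional $\langle\cdot,\psi\rangle$ shows $\langle F,\psi\rangle=\langle f,\psi\rangle$, so $F=f$ almost everywhere.

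Putting these together finishes the argument: $g$ lies in the closed symmetric convex hull but cannot arise as an integral representation, so $\overline{\conv(\pm\mathbb{D}_\sigma)}\supsetneq M(\mathbb{D}_\sigma)$. I expect the main obstacle to be the rigorous justification of the continuity lemma rather than the construction of $g$. The delicate point is the passage from a Borel measure on $\mathbb{D}_\sigma\subset H$ to one on parameter space: point evaluation $d\mapsto d(x)$ is \emph{not} $L^2$-continuous, so the pointwise integrand defining $F$ is not obviously $\mu$-measurable, and one must either invoke a measurable selection for the continuous parametrization or establish the measurability of $F$ directly. Once the superposition is expressed over parameter space, the continuity of $F$ and the identification $F=f$ almost everywhere are routine dominated-convergence and Fubini arguments.
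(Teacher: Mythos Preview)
Your approach is essentially the same as the paper's: exhibit a Heaviside-type step function as an $L^2$ limit of dictionary elements $\sigma(t(\omega_0\cdot x+b_0))$, then argue it cannot lie in $M(\mathbb{D}_\sigma)$ because integral representations over a smooth dictionary must have continuous representatives. The paper dispatches the second step in a single sentence (``since $\sigma$ is smooth, the discontinuous function $\sigma_0(x_1)$ cannot have an integral representation''), whereas you have correctly flagged that point evaluation is not $L^2$-continuous and that some care---e.g.\ a measurable selection pulling $\mu$ back to parameter space, followed by dominated convergence---is needed to make this rigorous; your outline for doing so is sound.
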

\begin{proof}
 Let $\sigma_0$ be the Heaviside activation function. Then we have
 \begin{equation}
  \lim_{r\rightarrow \infty}\|\sigma_0(x_1) - \sigma(rx_1)\|_H = 0,
 \end{equation}
 since $\sigma$ is sogmoidal.
 Thus $\sigma_0(x_1)\in \overline{\conv(\pm\mathbb{D}_\sigma)}$. However, since $\sigma$ is smooth, the discontinuous function $\sigma_0(x_1)$ cannot have an integral representation of the form \eqref{integral-representation-def}, so that $\sigma_0(x_1)\notin M(\mathbb{D}_\sigma)$.
\end{proof}

\section{Properties of $\mathcal{K}(\mathbb{P}^d_k)$ and relationship with the Barron and Badon BV spaces}\label{relu-barron-section}
In this section we study the space $\mathcal{K}(\mathbb{P}_k)$ in more detail. We begin by explaining the precise definition \eqref{relu-k-space-definition}, i.e. how we define an appropriate dictionary corresponding to the ReLU$^k$ activation function. The problem with letting $\sigma_k(x) = [\max(0,x)]^k$ and setting
\begin{equation}
 \mathbb{D} = \{\sigma_k(\omega\cdot x + b):~\omega\in \mathbb{R}^d,~b\in \mathbb{R}\},
\end{equation}
is that unless $k=0$ the dictionary elements are not bounded in $L^2(B_1^d)$, since $\sigma_k$ is not bounded and we can shift $b$ arbitrarily. This manifests itself in the fact that $\|\cdot\|_{\mathcal{K}_1(\mathbb{D})}$ is a semi-norm which contains the set of polynomials of degree at most $k-1$ in its kernel (this occurs since the arbirtrarily large elements in $\mathbb{D}$ are polynomials on the doamin $\Omega$).

We rectify this issue by considering the dictionary
\begin{equation}
 \mathbb{P}_k = \{\sigma_k(\omega\cdot x + b):~\omega\in S^{d-1},~b\in [c_1,c_2]\},
\end{equation}
where $c_1$ and $c_2$ are chosen to satisfy 
\begin{equation}
 c_1 < \inf \{x\cdot \omega:x\in \Omega, \omega\in S^{d-1}\} < \sup\{x\cdot \omega:x\in \Omega, \omega\in S^{d-1}\}< c_2.
\end{equation}
This has the effect of ensuring that the dictionary $\mathbb{P}_k$ is bounded and the constants $c_1$ and $c_2$ are chosen so that $\sigma_k(\omega\cdot x + b)\in \mathbb{P}_k$ whenever the hyperplane $\{\omega\cdot x + b = 0\}$ intersects $\Omega$. Further, when $c_1 < b < \inf \{x\cdot \omega:x\in \Omega, \omega\in S^{d-1}\}$ or $\sup\{x\cdot \omega:x\in \Omega, \omega\in S^{d-1}\} < b < c_2$, we recover all polynomials of degree at most $k$ on $\Omega$ as well.

Next, we consider the relationship between $\mathcal{K}(\mathbb{P}_1)$ and the Barron norm introduced in \cite{ma2019barron}, which is given by
\begin{equation}\label{barron-norm}
 \|f\|_{\mathcal{B}} = \inf\left\{\mathbb{E}_\rho(|a|(|\omega|_1 + |b|)):~f(x) = \int_{\mathbb{R}\times\mathbb{R}^d\times\mathbb{R}} a\sigma_1(\omega\cdot x + b)\rho(da,d\omega,db)\right\},
\end{equation}
where we recall that $\sigma_1$ is the rectified linear unit and the infimum is taken over all integral representations of $f$. Here $\rho$ is a probability distribution on $\mathbb{R}\times\mathbb{R}^d\times\mathbb{R}$, and the expectation is taken with respect to $\rho$. We show that the $\mathcal{K}(\mathbb{P}_1)$ space is equivalent to the Barron space when restricted to bounded domains $\Omega$. 

\begin{proposition}\label{barron-norm-equivalence-theorem}
 For any bounded domain $\Omega$, we have
 \begin{equation}
  \sqrt{d}\|f\|_{\mathbb{P}_1} \eqsim \inf_{f_e|_{\Omega} = f}\|f_e\|_{\mathcal{B}},
 \end{equation}
 where the infemum is taken over all extensions of $f$ to the whole of $\mathbb{R}^d$. Here the implied constant depends only upon the constants $c_1$ and $c_2$ taken in the definition of $\mathbb{P}_1$.
\end{proposition}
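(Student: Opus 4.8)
The plan is to read the stated equivalence as the two-sided bound
$\|f\|_{\mathbb{P}_1} \lesssim \inf_{f_e|_\Omega = f}\|f_e\|_{\mathcal{B}}$ together with $\inf_{f_e|_\Omega = f}\|f_e\|_{\mathcal{B}} \lesssim \sqrt{d}\,\|f\|_{\mathbb{P}_1}$, with implied constants depending only on $c_1,c_2,\Omega$; the factor $\sqrt{d}$ enters solely through the gap $|\omega|_2 \le |\omega|_1 \le \sqrt{d}\,|\omega|_2$ between the $\ell^2$-normalization built into $\mathbb{P}_1$ (via $\omega\in S^{d-1}$) and the $\ell^1$-penalty in the Barron norm. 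The two workhorses are positive $1$-homogeneity of $\sigma_1$, which lets me rewrite $a\sigma_1(\omega\cdot x + b) = a|\omega|_2\,\sigma_1(\hat\omega\cdot x + b/|\omega|_2)$ with $\hat\omega = \omega/|\omega|_2 \in S^{d-1}$, and Lemma \ref{prokhorov-lemma}. The latter applies because $\mathbb{P}_1$ is compact (it is the image of the compact set $S^{d-1}\times[c_1,c_2]$ under the $L^2(\Omega)$-continuous map $(\omega,b)\mapsto \sigma_1(\omega\cdot x + b)$), so it identifies $\|f\|_{\mathbb{P}_1}$ with the infimal total variation of a signed measure on $S^{d-1}\times[c_1,c_2]$ representing $f$ on $\Omega$.

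For the reverse inequality (variation controls Barron) I would take a near-optimal measure $\nu$ on $S^{d-1}\times[c_1,c_2]$ with $f = \int \sigma_1(\omega\cdot x + b)\,d\nu$ on $\Omega$ and $\|\nu\| \le \|f\|_{\mathbb{P}_1} + \epsilon$; the same integral defines an extension $f_e$ to all of $\mathbb{R}^d$. Normalizing $|\nu|/\|\nu\|$ to a probability measure $\rho$ and setting the weight $a = \|\nu\|\,\tfrac{d\nu}{d|\nu|} \in \{\pm\|\nu\|\}$ yields a Barron representation of $f_e$ of cost $\int (|\omega|_1 + |b|)\,d|\nu| \le (\sqrt{d} + \max(|c_1|,|c_2|))\|\nu\|$, using $|\omega|_1 \le \sqrt{d}$ on $S^{d-1}$ and boundedness of $|b|$ on $[c_1,c_2]$. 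Since $\sqrt{d}\ge 1$ this is $\lesssim \sqrt{d}\,\|\nu\|$, and $\epsilon\to 0$ gives the claim.

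For the forward inequality (Barron controls variation) I would start from a near-optimal extension $f_e$ with Barron measure $\rho$ and homogenize each atom as above, setting $\tilde b = b/|\omega|_2$. Splitting $\rho$ by whether $\tilde b \in [c_1,c_2]$: on that set, pushing $a|\omega|_2\,\rho$ forward under $(\omega,b)\mapsto(\hat\omega,\tilde b)$ gives a measure on $\mathbb{P}_1$ of total variation at most $\int |a||\omega|_2\,d\rho \le \int |a||\omega|_1\,d\rho \le \|f_e\|_{\mathcal{B}}$ (pushforward does not increase total variation), representing that part of $f$. On the complementary set $\tilde b \notin [c_1,c_2]$ (together with the degenerate $\omega = 0$ atoms) the hyperplane $\{\hat\omega\cdot x + \tilde b = 0\}$ misses $\Omega$ by the choice of $c_1,c_2$, so each $\sigma_1(\hat\omega\cdot x + \tilde b)$ is affine on $\Omega$; integrating, this piece restricts to one affine function $g(x) = p\cdot x + q$ with $|p|_2 \le \int |a||\omega|_1\,d\rho \le \|f_e\|_{\mathcal{B}}$ and $|q| \le \int |a||b|\,d\rho \le \|f_e\|_{\mathcal{B}}$. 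It then remains to bound $\|g\|_{\mathbb{P}_1}$, for which I would use dictionary elements whose biases lie in their linear or constant regime over $\Omega$: identities such as $\sigma_1(\hat p\cdot x + c) - \sigma_1(-\hat p\cdot x + c) = 2\,\hat p\cdot x$ and $\sigma_1(\hat\omega\cdot x + c) + \sigma_1(-\hat\omega\cdot x + c) = 2c$ on $\Omega$, valid for a suitable $c\in\{c_1,c_2\}$, express any linear function and any constant as fixed-mass combinations of elements of $\mathbb{P}_1$, so that $\|g\|_{\mathbb{P}_1} \lesssim |p|_2 + |q| \lesssim \|f_e\|_{\mathcal{B}}$. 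Adding the two pieces via the triangle inequality and taking the infimum over extensions finishes the direction.

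I expect the affine leftover to be the main obstacle. One must verify that every $\tilde b$ outside $[c_1,c_2]$ yields a function that is genuinely affine (not merely approximately so) on $\Omega$ — this is exactly what the strict separation $c_1 < \inf_{x\in\Omega,\omega} x\cdot\omega$ and $\sup_{x\in\Omega,\omega} x\cdot\omega < c_2$ provides — and that the chosen biases $c\in\{c_1,c_2\}$ really place the representing ReLUs in a purely linear or purely constant regime over all of $\Omega$; this forces attention to the signs of the projections and may require using both $c_1$ and $c_2$, which is precisely where the constant's dependence on $c_1,c_2$ enters. The remaining points — measurability of the splitting of $\rho$, existence of the Bochner integral defining the extension, and the $\omega=0$ atoms — are routine given Lemma \ref{prokhorov-lemma} and the compactness of $\mathbb{P}_1$.
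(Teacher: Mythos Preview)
Your proposal is correct and follows essentially the same approach as the paper: both use positive homogeneity of $\sigma_1$ to normalize $\omega$ onto $S^{d-1}$, split according to whether the normalized bias lies in $[c_1,c_2]$, and handle the affine leftover by expressing linear and constant functions as bounded combinations of elements of $\mathbb{P}_1$. The paper packages the argument slightly differently by introducing an auxiliary dictionary $\mathbb{B} = \{(|\omega|_1+|b|)^{-1}\sigma_1(\omega\cdot x+b)\}$ and proving the mutual containments $\mathbb{P}_1 \subset C\sqrt{d}\cdot\overline{\conv(\pm\mathbb{B})}$ and $\mathbb{B}\subset C\cdot\overline{\conv(\pm\mathbb{P}_1)}$, whereas you work directly with the representing measures via Lemma~\ref{prokhorov-lemma} applied to the compact dictionary $\mathbb{P}_1$; the underlying case analysis and estimates are identical.
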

\begin{proof}
 Consider the dictionary
 \begin{equation}
 \mathbb{B} = \{(|\omega|_1 + |b|)^{-1}\sigma_1(\omega\cdot x + b):~\omega\in \mathbb{R}^d,~b\in \mathbb{R}\}\subset L^2(\Omega).
\end{equation}
From Lemma \ref{prokhorov-lemma}, it follows that $\|f\|_{\mathbb{B}} = \|f\|_{\mathcal{B}}$. Indeed, by making the change of variables $\mu = |a|(|\omega|_1 + |b|)\rho$, we get 
\begin{equation}
    \|f\|_{\mathcal{B}} = \inf\left\{\|\mu\|:~f = \int_\mathbb{B} i_{\mathbb{B}\rightarrow L^2(\Omega)}d\mu\right\}.
\end{equation}

 Thus, it suffices to show that $\mathbb{P}^d_1\subset C\sqrt{d}\cdot\overline{\conv(\pm\mathbb{B})}$ and $\mathbb{B}\subset C\cdot\overline{\conv(\pm\mathbb{P}_1)}$ for a constant $C(c_1,c_2)$.
 
 So let $g\in \mathbb{P}^d_1$. This means that $g(x) = \sigma_1(\omega \cdot x + b)$ for some $\omega\in S^{d-1}$ and $b\in [-c_1,c_2]$. Thus $$(|\omega|_1 + |b|) \leq (\sqrt{d} + \max(c_1,c_2)) \leq C(c_1,c_2)\sqrt{d}$$
 and since $(|\omega|_1 + |b|)^{-1}\sigma_1(\omega \cdot x + b)\in \mathbb{B}$, we see that $g\in C\sqrt{d}\cdot \overline{\conv(\pm\mathbb{B})}$.
 
 Now, let $g\in \mathbb{B}$. Then $g(x) = (|\omega|_1 + |b|)^{-1}\sigma_1(\omega \cdot x + b)$ for some $\omega\in \mathbb{R}^d$ and $b\in \mathbb{R}$. 
 
 Consider first the case when $\omega \neq 0$. Note that by the positive homogeneity of $\sigma_1$ we can assume that $|\omega| = 1$, i.e. that $\omega\in S^{d-1}$. Further, we have that $(|\omega|_1 + |b|)^{-1} \leq (1+|b|)^{-1}$. Thus, we must show that
 \begin{equation}
  \tilde g(x) := (1+|b|)^{-1}\sigma_1(\omega \cdot x + b)\in C\cdot\overline{\conv(\pm\mathbb{P}_1)} 
 \end{equation}
 for $\omega\in S^{d-1}$ and $b\in \mathbb{R}$. For $b\in [c_1,c_2]$ this clearly holds with $C=1$ since $(1 + |b|)^{-1} \leq 1$ and for such values of $b$, we have $\sigma_1(\omega\cdot x + b)\in \mathbb{P}^d_1$. If $b < c_1$, then $\tilde g(x) = 0$, so we trivially have $\tilde g\in \overline{\conv(\pm\mathbb{P}_1)}$. Finally, if $b > c_2$, then $\omega\cdot x + b$ is positive on $\Omega$, so that
 $$
 \tilde g(x) = (1+|b|)^{-1}(\omega\cdot x + b) = (1+|b|)^{-1}\omega\cdot x + b(1+|b|)^{-1}.
 $$
 Now $\omega\cdot x\in 2\cdot\overline{\conv(\pm\mathbb{P}_1)}$ and $1 = [\sigma_1(\omega\cdot x + 2) - \sigma_1(\omega\cdot x + 1)]\in 2\cdot \overline{\conv(\pm\mathbb{P}_1)}$.
 Combined with the above and the fact that $(1+|b|)^{-1},|b|(1+|b|)^{-1}\leq 1$, we get $\tilde g\in 4\cdot \overline{\conv(\pm\mathbb{P}_1)}$.
 
 Finally, if $\omega = 0$, then $g(x) = 1$ and by the above paragraph we clearly also have $g\in 2\cdot \overline{\conv(\pm\mathbb{P}_1)}$. This completes the proof.
\end{proof}

Note that it follows from this result that the Barron space $\mathcal{B}$ is a Banach space, which was first proven in \cite{wojtowytsch2020representation}.

Next, we compare the spaces $\mathcal{K}(\mathbb{P}_k)$ and their variation norms to the Radon BV semi-norms introduced in \cite{ongie2019function,parhi2020banach}. Specifically, these norms coincide with the $\mathbb{P}_k$-variation norms on a bounded domain $\Omega$ up to a kernel consisting of polynomials.
\begin{theorem}\label{radon-bv-equivalence-thm}
    For any bounded domain $\Omega$, we have
    \begin{equation}
    \inf_{p\in \mathcal{P}_k}\|f + p\|_{\mathbb{P}_k} = \frac{1}{k!}\inf_{f_e|\Omega = f}|f_e|_{(k+1)}
    \end{equation}
    where $|\cdot|_{(k+1)}$ is the Radon BV seminorm introduced in \cite{parhi2020banach}, $f_e$ is an extension of $f$ to the whole of $\mathbb{R}^d$, and $\mathcal{P}_k$ is the space of polynomial of degree at most $k$.
\end{theorem}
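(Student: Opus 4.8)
The plan is to rewrite both sides as infima of total-variation masses over integral representations by $\sigma_k$-ridge functions and then match these infima exactly. The bridge on the left is Lemma~\ref{prokhorov-lemma}; on the right it is the integral-representation (native-space) characterization of the Radon BV seminorm from \cite{ongie2019function,parhi2020banach}; and the exact constant $k!$ will come from the distributional identity $\partial_t^{k+1}\sigma_k(t)=k!\,\delta(t)$. First I would verify that $\mathbb{P}_k$ is compact in $L^2(\Omega)$: the parameter set $S^{d-1}\times[c_1,c_2]$ is compact and $(\omega,b)\mapsto\sigma_k(\omega\cdot x+b)$ is continuous into $L^2(\Omega)$ (by dominated convergence for $k\ge 1$, and using that the discontinuity set of $\sigma_0$ is null for $k=0$), so $\mathbb{P}_k$ is the continuous image of a compact set. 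Lemma~\ref{prokhorov-lemma} then gives, after pulling a measure on $\mathbb{P}_k$ back to a signed measure $\alpha$ on $S^{d-1}\times[c_1,c_2]$,
$$
\|g\|_{\mathbb{P}_k}=\inf\Big\{\|\alpha\|:\ g=\int_{S^{d-1}\times[c_1,c_2]}\sigma_k(\omega\cdot x+b)\,d\alpha(\omega,b)\ \text{in }L^2(\Omega)\Big\}.
$$

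Second, I would invoke the representer characterization of the Radon BV seminorm, which states that
$$
|f_e|_{(k+1)}=k!\,\inf\Big\{\|\alpha\|:\ f_e=\int_{S^{d-1}\times\mathbb{R}}\sigma_k(\omega\cdot x+b)\,d\alpha(\omega,b)+q,\ q\in\mathcal{P}_k\Big\},
$$
the factor $k!$ being exactly the weight of the Dirac mass produced when the Radon-domain operator $\mathcal{R}\,\Delta^{(d+2k-1)/2}$ is applied to a single $\sigma_k$ ridge. With these two representations in hand, the theorem reduces to matching the two infima.

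The core step is a restriction-and-splitting argument. Given an extension $f_e=\int_{S^{d-1}\times\mathbb{R}}\sigma_k\,d\alpha+q$, I split the $b$-integral according to whether the hyperplane $\{\omega\cdot x+b=0\}$ meets $\overline{\Omega}$: for $b$ below the intersecting range $\sigma_k(\omega\cdot x+b)\equiv 0$ on $\Omega$; for $b$ above it $\sigma_k(\omega\cdot x+b)=(\omega\cdot x+b)^k$ is a polynomial of degree $\le k$ on $\Omega$; and the intersecting range lies in $(c_1,c_2)$ by the choice of $c_1,c_2$, so there $\sigma_k(\omega\cdot x+b)\in\mathbb{P}_k$. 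Collecting the first two contributions together with $q$ into a single polynomial $\tilde p\in\mathcal{P}_k$, the restriction to $\Omega$ becomes $f-\tilde p=\int_{\text{intersecting}}\sigma_k\,d\alpha$, an honest $\mathbb{P}_k$-representation of mass at most $\|\alpha\|$; this yields $\inf_{p}\|f+p\|_{\mathbb{P}_k}\le|f_e|_{(k+1)}/k!$. Conversely, from a near-optimal $p$ and a $\mathbb{P}_k$-representation of $f+p$ on $\Omega$, the global function $\int\sigma_k\,d\alpha-p$ is an extension of $f$ whose Radon BV seminorm is at most $k!\|\alpha\|$, giving the reverse inequality. Taking infima yields the claimed equality; the reason it is an equality (rather than only the equivalence $\eqsim$ stated in the introduction) is that enlarging the $b$-range from $[c_1,c_2]$ to all of $\mathbb{R}$ only adds ridges that are polynomial on $\Omega$, and these are absorbed by the quotient by $\mathcal{P}_k$.

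The main obstacle is the precise form of the right-hand representation in the second step: the Radon BV seminorm is defined through $\mathcal{R}\,\Delta^{(d+2k-1)/2}$, whose meaning requires the ramp filter when $d$ is even, so one must confirm that its native space is exactly the space of $\sigma_k$-integrals modulo $\mathcal{P}_k$ with the stated $k!$ normalization, and that no additional boundary term (such as the $|\nabla f(\infty)|$ appearing for $k=1$) survives after the infimum over extensions. A secondary technicality is the measurability of the three-way splitting of $\alpha$ and the treatment of the null set of $b$ for which the hyperplane is tangent to $\partial\Omega$, but these do not affect the masses.
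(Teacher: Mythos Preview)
Your proposal is correct and follows essentially the same route as the paper: both sides are recast as total-variation infima over $\sigma_k$-ridge integral representations via Lemma~\ref{prokhorov-lemma} on the left and the representer theorem of \cite{parhi2020banach} on the right, and then the $b$-integral is split according to whether the hyperplane meets $\Omega$, with the non-intersecting pieces absorbed into the polynomial quotient. The only cosmetic difference is that the paper quotes the representation from \cite{parhi2020banach} in the form $f_e=\tfrac{1}{k!}\int[\sigma_k(\omega\cdot x+b)-(\omega\cdot x+b)^k]\,d\mu+p$ rather than your $\int\sigma_k\,d\alpha+q$, but since $(\omega\cdot x+b)^k\in\mathcal{P}_k$ these are equivalent and the matching of infima proceeds identically.
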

Note that by the remarks in \cite{parhi2020banach}, when $k=1$ this theorem also applies to the semi-norm introduced in \cite{ongie2019function}, which is equivalent to the Radon BV semi-norm.
\begin{proof}
 Theorem 22 in \cite{parhi2020banach} implies that $|f|_{(k+1)} \leq 1$ is equivalent to an integral representation of the form
 \begin{equation}\label{radon-intgral-representation}
     f(x) = \frac{1}{k!}\int_{S^{d-1}\times \mathbb{R}} [\sigma_{k}(\omega\cdot x + b) - (\omega\cdot x + b)^{k}]d\mu(\omega, b) + p(x),
 \end{equation}
 where $\mu$ is a Borel measure on $S^{d-1}\times \mathbb{R}$, $p(x)$ is a polynomial of degree at most $k$ and $\mu$ satisfies $\|\mu\| = 1$. 
 
 Further, Lemma \ref{prokhorov-lemma} means that $\|f\|_{\mathbb{P}_k} \leq 1$ is equivalent to the existence of an integral representation
 \begin{equation}\label{integral-representation-barron}
     f(x) = \int_{S^{d-1}\times [c_1,c_2]} \sigma_{k}(\omega\cdot x + b)d\mu(\omega, b)
 \end{equation}
 on $\Omega$, where $\mu$ is a Borel measure on $S^{d-1}\times [c_1,c_2]$ and $\|\mu\|\leq 1$. This follows since $\mathbb{P}_k$ is a compact subset of $L^2(\Omega)$ (it is continuously parameterized by the compact set $S^{d-1}\times [c_1,c_2]$).
 
 So if $\|f\|_{\mathbb{P}_k} \leq 1$ we use the integral representation and set \eqref{integral-representation-barron} and set
 \begin{equation}
     f_e(x) = k!\left[\frac{1}{k!}\int_{S^{d-1}\times [c_1,c_2]} [\sigma_{k}(\omega\cdot x + b) - (\omega\cdot x + b)^{k}]d\mu(\omega, b) + p(x)\right],
 \end{equation}
 where $$p(x) = \frac{1}{k!}\int_{S^{d-1}\times [c_1,c_2]} (\omega\cdot x + b)^{k}d\mu(\omega, b).$$
 Since $S^{d-1}\times [c_1,c_2]\subset S^{d-1}\times \mathbb{R}$ we see that $|f_e|_{k+1}\leq k!$. This implies that $\inf_{f_e|\Omega = f}|f_e|_{(k+1)} \leq k!\|f\|_{\mathbb{P}_k}$. Since $\mathcal{P}_k$ is the kernel of the $|\cdot|_{(k+1)}$ (see Lemma 19 in \cite{parhi2020banach}), we can take an infemum over $\mathcal{P}_k$ to get
 \begin{equation}
     \inf_{f_e|\Omega = f}|f_e|_{(k+1)} \leq k!\inf_{p\in \mathcal{P}_k}\|f+p\|_{\mathbb{P}_k}.
 \end{equation}
 
 For the converse, suppose that $f$ satisfies $\inf_{f_e|\Omega = f}|f_e|_{(k+1)} \leq 1$ and let $f_e$ be an extension of $f$ such that
 \begin{equation}
     |f_e|_{(k+1)} \leq 1 + \epsilon.
 \end{equation}
 We now apply integral representation \eqref{radon-intgral-representation} and
 note that if $b\notin [c_1,c_2]$, then $\sigma_{k}(\omega\cdot x + b) - (\omega\cdot x + b)^{k}$ is a polynomial on the domain $\Omega$. So we can write
 \begin{equation}
     f_e(x) = f(x) = f'(x) + q(x)
 \end{equation}
 for $x\in \Omega$, where $\|f\|_{\mathbb{P}_k}\leq 1/k!$ and $q(x)$ is a polynomial of degree at most $k$. This implies that 
 \begin{equation}
 \inf_{p\in \mathcal{P}_k}\|f+p\|_{\mathbb{P}_k} \leq \|f'\|_{\mathbb{P}_k} = 1.
 \end{equation}
 Hence $\inf_{p\in \mathcal{P}_k}\|f+p\|_{\mathbb{P}_k} \leq (1/k!)\inf_{f_e|\Omega = f}|f_e|_{(k+1)}$ as desired.
\end{proof}
As a corollary of this result, we have the following equivalence when we strengthen the Radon BV semi-norm to a norm.
\begin{corollary}
For any bounded domain $\Omega$, we have
    \begin{equation}
    \|f\|_{\mathbb{P}_k} \eqsim \inf_{f_e|\Omega = f}|f_e|_{(k+1)} + \|f\|_{L^2(\Omega)}.
    \end{equation}
\end{corollary}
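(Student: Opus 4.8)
The plan is to leverage Theorem \ref{radon-bv-equivalence-thm}, which already gives the equivalence modulo polynomials, and upgrade it to a genuine norm equivalence by controlling the polynomial component through the $L^2$ norm. One direction is immediate: taking the trivial choice $p = 0$ in Theorem \ref{radon-bv-equivalence-thm} gives $\inf_{f_e|\Omega = f}|f_e|_{(k+1)} = k!\inf_{p \in \mathcal{P}_k}\|f+p\|_{\mathbb{P}_k} \le k!\|f\|_{\mathbb{P}_k}$, while the embedding $\|f\|_{L^2(\Omega)} \le K_{\mathbb{P}_k}\|f\|_{\mathbb{P}_k}$ from Lemma \ref{fundamental-norm-lemma} (with $K_{\mathbb{P}_k} = \sup_{d\in\mathbb{P}_k}\|d\|_{L^2(\Omega)}$) controls the second term. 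Adding these bounds the right-hand side by $(k! + K_{\mathbb{P}_k})\|f\|_{\mathbb{P}_k}$.

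For the reverse inequality I would first record the structural fact that $\mathcal{P}_k \subset \mathcal{K}(\mathbb{P}_k)$: as noted after the definition of $\mathbb{P}_k$, choosing $b$ with $\sup\{x\cdot\omega\} < b < c_2$ makes $\omega\cdot x + b$ strictly positive throughout $\Omega$, so $\sigma_k(\omega\cdot x + b) = (\omega\cdot x + b)^k$ is a dictionary element, and finite linear combinations of such powers span $\mathcal{P}_k$. Hence $\|\cdot\|_{\mathbb{P}_k}$ is finite on the finite-dimensional space $\mathcal{P}_k$, and it is a genuine norm there, since $\|p\|_{\mathbb{P}_k} = 0$ forces $\|p\|_{L^2(\Omega)} = 0$ by the embedding, hence $p = 0$. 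Because all norms on a finite-dimensional space are equivalent, there is a constant $C_\Omega$ (depending on $\Omega$, $d$, $k$, but \emph{not} on $p$) with $\|p\|_{\mathbb{P}_k} \le C_\Omega\|p\|_{L^2(\Omega)}$ for every $p \in \mathcal{P}_k$.

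With this in hand I would fix $\epsilon > 0$ and use Theorem \ref{radon-bv-equivalence-thm} to select $p^* \in \mathcal{P}_k$ with $\|f + p^*\|_{\mathbb{P}_k} \le \frac{1}{k!}\inf_{f_e|\Omega = f}|f_e|_{(k+1)} + \epsilon$. Writing $p^* = (f+p^*) - f$ and combining the $L^2$ triangle inequality with the embedding yields $\|p^*\|_{L^2(\Omega)} \le K_{\mathbb{P}_k}\|f+p^*\|_{\mathbb{P}_k} + \|f\|_{L^2(\Omega)}$. The triangle inequality for $\|\cdot\|_{\mathbb{P}_k}$ then gives
\begin{equation}
\|f\|_{\mathbb{P}_k} \le \|f + p^*\|_{\mathbb{P}_k} + \|p^*\|_{\mathbb{P}_k} \le \|f+p^*\|_{\mathbb{P}_k} + C_\Omega\|p^*\|_{L^2(\Omega)},
\end{equation}
and substituting the previous bound, then letting $\epsilon \to 0$, produces $\|f\|_{\mathbb{P}_k} \lesssim \inf_{f_e|\Omega = f}|f_e|_{(k+1)} + \|f\|_{L^2(\Omega)}$, which together with the first paragraph completes the equivalence.

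The main obstacle is the reverse direction, and specifically the passage from ``equivalence up to an additive polynomial'' to a true norm estimate: the near-optimal polynomial $p^*$ supplied by Theorem \ref{radon-bv-equivalence-thm} could a priori be enormous even when $f$ is small, so one needs a quantitative handle on $\|p^*\|_{\mathbb{P}_k}$. Finite-dimensionality of $\mathcal{P}_k$ is exactly what rescues this, converting the abstract finiteness of $\|\cdot\|_{\mathbb{P}_k}$ on polynomials into the uniform bound $\|p\|_{\mathbb{P}_k} \le C_\Omega\|p\|_{L^2(\Omega)}$, after which $\|p^*\|_{L^2(\Omega)}$ is tamed by the already-controlled quantities $\|f+p^*\|_{\mathbb{P}_k}$ and $\|f\|_{L^2(\Omega)}$. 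The only point to verify is that $C_\Omega$ is independent of $f$, which it is, since it arises purely from norm equivalence on the fixed space $\mathcal{P}_k$; consistent with the statement, it does however depend on the dimension through $\mathcal{P}_k$.
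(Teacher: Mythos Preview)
Your proof is correct and follows essentially the same route as the paper: both directions use Theorem \ref{radon-bv-equivalence-thm} together with the embedding from Lemma \ref{fundamental-norm-lemma}, and the reverse inequality is obtained by choosing a (near-)optimal polynomial $p^*$, invoking finite-dimensionality of $\mathcal{P}_k$ to bound $\|p^*\|_{\mathbb{P}_k}$ by $\|p^*\|_{L^2(\Omega)}$, and then controlling $\|p^*\|_{L^2(\Omega)}$ via the triangle inequality and the embedding. The only cosmetic difference is that you work with an $\epsilon$-approximate minimizer and let $\epsilon \to 0$, whereas the paper argues that the infimum over $\mathcal{P}_k$ is actually attained; your version sidesteps that (minor) existence argument.
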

Note that by the remarks in \cite{parhi2020banach}, when $k=1$ this corollary also applies to the semi-norm introduced in \cite{ongie2019function}.
\begin{proof}
 By Theorem \ref{radon-bv-equivalence-thm} we have $$\inf_{f_e|\Omega = f}|f_e|_{(k+1)} = k!\inf_{p\in \mathcal{P}_k}\|f+p\|_{\mathbb{P}_k} \leq k!\|f\|_{\mathbb{P}_k}.$$
 Further, by Lemma \ref{fundamental-norm-lemma} we have $\|f\|_{L^2(\Omega)} \lesssim \|f\|_{\mathbb{P}_k}$ since the dictionary $\mathbb{P}_k$ is bounded in $L^2(\Omega)$. Putting these together, we get
 \begin{equation}
     \inf_{f_e|\Omega = f}|f_e|_{(k+1)} + \|f\|_{L^2(\Omega)} \lesssim \|f\|_{\mathbb{P}_k}.
 \end{equation}
 To prove the other direction, let $p^*\in \mathcal{P}_k$ be such that
 \begin{equation}
     \|f - p^*\|_{\mathbb{P}_k} = \inf_{p\in \mathcal{P}_k}\|f+p\|_{\mathbb{P}_k}.
 \end{equation}
 Such a $p^*$ can always be found since $\mathcal{P}_k$ is a finite dimensional space and the function $\|f + p\|_{\mathbb{P}_k}\rightarrow \infty$ as $p\rightarrow \infty$. Then, using Theorem \ref{radon-bv-equivalence-thm}, we have
 \begin{equation}
     \|f\|_{\mathbb{P}_k} \leq \|f - p^*\|_{\mathbb{P}_k} + \|p^*\|_{\mathbb{P}_k} = \frac{1}{k!}\inf_{f_e|\Omega = f}|f_e|_{(k+1)} + \|p^*\|_{\mathbb{P}_k}.
 \end{equation}
 Since $\mathcal{K}(\mathbb{P}_k)$ contains all polynomials of degree at most $k$ on $\Omega$, $\|\cdot\|_{\mathbb{P}_k}$ is a finite norm on $\mathbb{P}_k$. As all norms on the finite dimensional space $\mathcal{P}_k$ are equivalent we get
 \begin{equation}
     \|p^*\|_{\mathbb{P}_k} \lesssim \|p^*\|_{L^2(\Omega)}.
 \end{equation}
 Next, we notice that
 \begin{equation}
     \|p^*\|_{L^2(\Omega)} \leq  \|f\|_{L^2(\Omega)} + \|f - p^*\|_{L^2(\Omega)}.
 \end{equation}
 Further, $\|f - p^*\|_{L^2(\Omega)} \lesssim \|f\|_{\mathbb{P}_k} \lesssim \inf_{f_e|\Omega = f}|f_e|_{(k+1)}$ for a constant $C$. This follows by Lemma \ref{fundamental-norm-lemma}, Theorem \ref{radon-bv-equivalence-thm} and the fact that $|p|_{(k+1)} = 0$ for any $p\in \mathcal{P}_k$ (Lemma 19 in \cite{parhi2020banach}). It follows that
 \begin{equation}
     \|p^*\|_{\mathbb{P}_k} \lesssim\|p^*\|_{L^2(\Omega)} \leq \|f\|_{L^2(\Omega)} + \|f - p^*\|_{L^2(\Omega)} \lesssim \inf_{f_e|\Omega = f}|f_e|_{(k+1)} + \|f\|_{L^2(\Omega)},
 \end{equation}
 so we finally get
 \begin{equation}
     \|f\|_{\mathbb{P}_k} \leq\frac{1}{k!}\inf_{f_e|\Omega = f}|f_e|_{(k+1)} + \|p^*\|_{\mathbb{P}_k} \lesssim \inf_{f_e|\Omega = f}|f_e|_{(k+1)} + \|f\|_{L^2(\Omega)}.
 \end{equation}
\end{proof}

\section{Characterization of $\mathcal{K}(\mathbb{P}_k)$ in One Dimension}\label{one-dimensional-characterization}
In this section, we prove a characterization of $\mathcal{K}(\mathbb{P}_k)$ in one dimension. In this case, the space $\mathcal{K}(\mathbb{P}_k)$ has a relatively simple characterization in terms of the space of bounded variation. In the case where $k=1$ an analogous characterization can be found in \cite{wojtowytsch2020representation}, section 4. Earlier results characterizing the Barron space in one dimension on the while of $\mathbb{R}$ were obtained in \cite{savarese2019infinite,li2020complexity}. Note that by the results of the previous section, a higher dimensional characterization in terms of the Radon transform is given in \cite{ongie2019function,parhi2020banach}.

\begin{theorem}\label{barron-space-1-d-characterization-theorem}
 Let $\Omega = [-1,1]$. We have
 \begin{equation}
 \mathcal{K}(\mathbb{P}_k) = \{f\in L^2([-1,1]):~\text{$f$ is $k$-times differentiable a.e. and }f^{(k)}\in BV([-1,1])\}.
 \end{equation}
 In particular, it holds that
 \begin{equation}
  \|f\|_{\mathbb{P}_k} \eqsim \sum_{j=0}^{k-1} |f^{(j)}(-1)| + \|f^{(k)}\|_{BV([-1,1])}. 
 \end{equation}

\end{theorem}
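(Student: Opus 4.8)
The plan is to characterize $\|f\|_{\mathbb{P}_k}$ by passing to integral representations. Since $\mathbb{P}_k$ is compact (it is the continuous image of $S^{d-1}\times[c_1,c_2]$, and here $S^0 = \{\pm 1\}$), Lemma~\ref{prokhorov-lemma} lets me write $\|f\|_{\mathbb{P}_k} = \inf\{\|\mu\| : f = \int \sigma_k(\omega x+b)\,d\mu(\omega,b)\}$ with $\mu$ a Borel measure on $\{\pm1\}\times[c_1,c_2]$. In one dimension the dictionary consists only of the ramps $\sigma_k(x+b)$ and $\sigma_k(-x+b)$; their kink lies at $x=\mp b$, which sweeps across all of $[-1,1]$ (and beyond) as $b$ ranges over $[c_1,c_2]\supset[-1,1]$. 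I will prove the two inequalities separately.

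For the upper bound I would start from a function with $f^{(k)}\in BV([-1,1])$, so that $f\in C^{k-1}$ and $g:=f^{(k)}$ is represented by its derivative measure $dg$ with total mass $|f^{(k)}|_{TV}$. Writing Taylor's formula with integral remainder at $-1$, substituting $g(t)=g(-1)+\int_{(-1,t]}dg$, and exchanging the order of integration (Fubini) collapses the remainder into a single ramp integral, yielding
\begin{equation}
f(x) = \sum_{j=0}^{k}\frac{f^{(j)}(-1)}{j!}(x+1)^j + \frac{1}{k!}\int_{(-1,1]}\sigma_k(x-s)\,dg(s).
\end{equation}
The integral term is exactly an integral over the $\omega=+1$ part of $\mathbb{P}_k$ (set $b=-s\in[-1,1)\subset[c_1,c_2]$) against a measure of mass $\tfrac{1}{k!}|f^{(k)}|_{TV}$, so by Lemma~\ref{prokhorov-lemma} its $\mathbb{P}_k$-norm is at most $\tfrac{1}{k!}|f^{(k)}|_{TV}$. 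For the polynomial part I use that every element of the finite-dimensional space $\mathcal{P}_k$ lies in $\mathcal{K}(\mathbb{P}_k)$ (built from dictionary elements with $b$ outside $[-1,1]$, where the ramps are genuine degree-$k$ polynomials on $\Omega$); since all norms on $\mathcal{P}_k$ are equivalent, $\|(x+1)^j\|_{\mathbb{P}_k}\le C(k,c_1,c_2)$. Summing and using $|f^{(k)}(-1)|\le\|f^{(k)}\|_{BV}$ gives $\|f\|_{\mathbb{P}_k}\lesssim \sum_{j=0}^{k-1}|f^{(j)}(-1)| + \|f^{(k)}\|_{BV}$.

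For the lower bound I would take any representation $f=\int\sigma_k(\omega x+b)\,d\mu$ and test the distributional derivatives against $\phi\in C_c^\infty((-1,1))$. Because $\tfrac{d^k}{dx^k}\sigma_k(\omega x+b) = \omega^k k!\,\mathbf{1}_{\{\omega x+b>0\}}$ is bounded by $k!$, the identity $\langle f^{(k)},\phi\rangle = \int\langle h^{(k)},\phi\rangle\,d\mu$ shows $\|f^{(k)}\|_{L^\infty}\le k!\|\mu\|$; differentiating once more, $\tfrac{d^{k+1}}{dx^{k+1}}\sigma_k(\omega x+b)=\omega^{k+1}k!\,\delta_{-\omega b}$ is a Dirac mass of weight $\pm k!$, so $\langle f^{(k+1)},\phi\rangle$ is bounded by $k!\|\phi\|_\infty\|\mu\|$, i.e. $f^{(k)}\in BV$ with $|f^{(k)}|_{TV}\le k!\|\mu\|$. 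The boundary values satisfy $f^{(j)}(-1)=\int h^{(j)}(-1)\,d\mu$ with $|h^{(j)}(-1)|\le C(k,c_1,c_2)$ for $j\le k-1$ (here $h=\sigma_k(\omega x+b)\in C^{k-1}$), giving $|f^{(j)}(-1)|\le C\|\mu\|$. Taking the infimum over representations yields the reverse inequality, and the set equality then follows since both quantities are finite simultaneously.

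The step I expect to be the main obstacle is the lower bound's interchange of differentiation and the integral. The tempting route, namely realizing $f=\int h\,d\mu$ as a Bochner integral in the stronger space $\{h:h^{(k)}\in BV\}$, fails because $b\mapsto\sigma_k(\omega x+b)$ is \emph{not} continuous in the $BV$ norm: moving the kink always changes the total variation by an $O(1)$ amount regardless of how small the shift is. The distributional testing argument above circumvents this entirely by working against fixed test functions and using only the uniform $L^\infty$ and total-mass bounds on $h^{(k)}$ and $h^{(k+1)}$; the convexity and lower semicontinuity of total variation are then exactly what is needed to push the estimate through the integral. A secondary point requiring care is the Fubini/Taylor manipulation in the upper bound, where one must fix the right-continuous representative of $g$ and track the endpoint contributions, though these do not affect the seminorm estimates.
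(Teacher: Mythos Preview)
Your argument is correct in both directions, but it differs from the paper's in how the technical work is organized. For the upper bound $\|f\|_{\mathbb{P}_k}\lesssim\sum_j|f^{(j)}(-1)|+\|f^{(k)}\|_{BV}$, you work directly with the distributional derivative measure $dg$ of $f^{(k)}$ and invoke Lemma~\ref{prokhorov-lemma} to read off the $\mathbb{P}_k$-norm from the resulting integral representation; the paper instead mollifies $f$ to a smooth $f_n\in B^1_{BV,k}$, applies the classical Peano kernel to $f_n$, and then passes to the limit using the closedness of $\overline{\conv(\pm\mathbb{P}_k)}$ in $L^2$. For the lower bound, you again go through Lemma~\ref{prokhorov-lemma} and differentiate under the Bochner integral by testing against $\phi\in C_c^\infty$; the paper instead observes that each dictionary element lies in a fixed multiple of $B^1_{BV,k}$, that $B^1_{BV,k}$ is convex, and that it is closed in $L^2$ (via $L^1$-compactness), hence contains $\overline{\conv(\pm\mathbb{P}_k)}$. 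Your route is arguably more direct---it never leaves the integral-representation framework and avoids the smooth-approximation step---while the paper's route is slightly more self-contained in that it does not appeal to Lemma~\ref{prokhorov-lemma} at all. Your anticipated obstacle (the failure of $b\mapsto\sigma_k(\omega\cdot+b)$ to be continuous into the BV norm) is a real one, and your distributional-testing workaround is exactly the right fix; the paper sidesteps this entirely by never differentiating under the integral, relying instead on the weak-$*$ closedness of the BV ball. One minor point: your claim $f^{(j)}(-1)=\int h^{(j)}(-1)\,d\mu$ needs the observation that the parametrization $(\omega,b)\mapsto\sigma_k(\omega\cdot+b)$ is continuous into $C^{k-1}([-1,1])$, so the Bochner integral may equally be taken there and pointwise evaluation commutes; this is immediate since $\sigma_k\in C^{k-1}$.
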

\begin{proof}
 We first prove that 
 \begin{equation}\label{upper-bound-barron-1-d}
  \|f\|_{\mathbb{P}_k} \lesssim \sum_{j=0}^{k-1} |f^{(j)}(-1)| + \|f^{(k)}\|_{BV([-1,1])}. 
 \end{equation}
Note that the right hand side is uniformly bounded for all $f = \sigma_k(\pm x + b)\in \mathbb{P}_k$, since $\sigma_k^{(k)}$ is a multiple of the Heaviside function and $b$ is bounded by $\max(|c_1|,|c_2|)$. By taking convex combinations, this means that for some constant $C$, we have
\begin{equation}
 \left\{\sum_{j=1}^n a_jh_j:~h_j\in \mathbb{P}^1_k,~\sum_{i=1}^n|a_i|\leq 1\right\} \subset CB^1_{BV,k},
\end{equation}
where
\begin{equation}
 B^1_{BV,k}:=\left\{f\in L^2([-1,1]):~\sum_{j=0}^{k-1} |f^{(j)}(-1)| + \|f^{(k)}\|_{BV([-1,1])} \leq 1\right\}.
\end{equation}
It is well-known that $B^1_{BV,k}$ is compact in $L^1([-1,1])$ (see, for instance Theorem 4 of Chapter 5 in \cite{evans2015measure}). This implies that $B^1_{BV,k}$ is closed in $L^2([-1,1])$, since if $f_n\rightarrow_{L^2} f$ with $f_n\in B^1_{BV,k}$, then there must exist a subsequence $f_{k_n}\rightarrow_{L^1} \tilde{f}\in B^1_{BV,k}$. Clearly $f=\tilde{f}$ and so $B^1_{BV,k}$ is closed in $L^2([-1,1])$. From this it follows that $\overline{\conv(\pm\mathbb{P}_k)} \subset CB^1_{BV,k}$ and we obtain \eqref{upper-bound-barron-1-d}.

Next, we prove the reverse inequality. So let $f\in B^1_{BV,k}$. By Theorem 2 in Chapter 5 of \cite{evans2015measure}, there exist $f_n\in C^\infty\cap B^1_{BV,k}$ such that $f_n\rightarrow f$ in $L^1([-1,1])$. Further, since $f_n,f\in B^1_{BV,k}$, we have that $\|f - f_n\|_{L^\infty([-1,1])}$ is uniformly bounded. Thus $$\|f - f_n\|^2_{L^2([-1,1])} \leq \|f - f_n\|_{L^1([-1,1])}\|f - f_n\|_{L^\infty([-1,1])} \rightarrow 0$$
and so $f_n\rightarrow f$ in $L^2([-1,1])$ as well.

Using the Peano kernel formula, we see that
\begin{equation}
 f_n(x) = \sum_{j=0}^{k} \frac{f_n^{(j)}(-1)}{j!}(x+1)^j + \int_{-1}^1 \frac{f_n^{(k+1)}(b)}{k!}\sigma_k(x-b)db.
\end{equation}
From the definition of the $BV$-norm and the fact that $f_n\in B^1_{BV,k}$, we see that
\begin{equation}
 \sum_{j=0}^{k} \frac{|f_n^{(j)}(-1)|}{j!}+ \int_{-1}^1 \frac{|f_n^{(k+1)}(b)|}{k!}db \leq C_1
\end{equation}
for a fixed constant $C_1$. Choose $k+1$ distinct $b_1,...,b_{k+1}\in [1, c_2]$ (note that we need $c_2 > 1$). Then by construction $\sigma_k(x+b_i) = (x+b_i)^k$ is a polynomial on $[-1,1]$. Moreover, it is well-known that the polynomials $(x+b_i)^k$ span the space of polynomials of degree at most $k$ (using for instance the determinant of Vandermonde matrix). Combined with the coefficient bound
\begin{equation}
 \sum_{j=0}^{k} \frac{|f_n^{(j)}(-1)|}{j!} \leq C_1,
\end{equation}
we see that
\begin{equation}
 \sum_{j=0}^{k} \frac{f_n^{(j)}(-1)}{j!}(x-a)^j \in C_2\cdot\overline{\conv(\pm\mathbb{P}_k)}
\end{equation}
for a fixed constant $C_2$ (independent of $f_n$). Furthermore, since also
\begin{equation}
 \int_{-1}^1 \frac{|f_n^{(k+1)}(b)|}{k!}db \leq C_1,
\end{equation}
we obtain
\begin{equation}
 \int_{-1}^1 \frac{f_n^{(k+1)}(b)}{k!}\sigma_k(x-b)db\in C_1\cdot\overline{\conv(\pm\mathbb{P}_k)}.
\end{equation}
This implies that $f_n\in C\cdot\overline{\conv(\pm\mathbb{P}_k)}$ for $C = C_1 + C_2$ and since $f_n\rightarrow f$ and $\overline{\conv(\pm\mathbb{P}_k)}$ is closed in $L^2([-1,1])$, we get $f\in C\cdot\overline{\conv(\pm\mathbb{P}_k)}$, which completes the proof.

\end{proof}

\section{Characterization of $\mathcal{K}_1(\mathbb{F}^d_s)$}\label{spectral-barron-section}
In this section we characterize the space $\mathcal{K}(\mathbb{F}_s)$ and the variation norm corresponding to the dictionary $\mathbb{F}_s$. In particular, we have that this variation norm is equivalent to the spectral Barron norm which has been widely used in the approximation theory of shallow neural networks \cite{barron1993universal,siegel2020approximation,siegel2020high,siegel2021sharp,klusowski2018approximation}.
\begin{theorem}\label{spectral-barron-theorem}
We have
\begin{equation}\label{fourier-integral-condition}
 \|f\|_{\mathbb{F}^d_s} = \inf_{f_e|_{\Omega} = f} \int_{\mathbb{R}^d} (1+|\xi|)^s|\hat{f}_e(\xi)|d\xi,
\end{equation}
where the infimum is taken over all extensions $f_e\in L^1(\mathbb{R}^d)$.
\end{theorem}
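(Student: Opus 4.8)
The plan is to prove \eqref{fourier-integral-condition} by showing that \emph{both} sides equal the single dual quantity
\[
D(f) := \sup\left\{\operatorname{Re}\langle f,h\rangle_{L^2(\Omega)} :\ h\in L^2(\Omega),\ |\hat h(\xi)|\leq (1+|\xi|)^s\ \text{for all } \xi\in\mathbb{R}^d\right\},
\]
where $h$ is extended by zero off $\Omega$. Writing $A:=\|f\|_{\mathbb{F}_s}$ and $B$ for the right-hand side of \eqref{fourier-integral-condition}, I would establish $A=D(f)$, then the easy inequality $D(f)\le B$, and finally $B\le D(f)$, which is the crux. Equality (rather than mere equivalence) should fall out precisely because the \emph{same} constraint $|\hat h(\xi)|\le(1+|\xi|)^s$ governs both dual problems.

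For $A=D(f)$ I would invoke the bipolar theorem: $A$ is the gauge of the closed symmetric convex set $\overline{\conv(\pm\mathbb{F}_s)}$, so $A=\sup\{\operatorname{Re}\langle f,h\rangle: h\in(\overline{\conv(\pm\mathbb{F}_s)})^\circ\}$, and the polar of a set equals the polar of its closed symmetric convex hull. The only concrete computation is $\langle(1+|\omega|)^{-s}e^{2\pi\iu\omega\cdot x},h\rangle_{L^2(\Omega)}=(1+|\omega|)^{-s}\overline{\hat h(\omega)}$, so $\sup_{d\in\mathbb{F}_s}|\langle d,h\rangle|\le1$ is exactly $|\hat h(\xi)|\le(1+|\xi|)^s$, giving $A=D(f)$. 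The inequality $D(f)\le B$ is then a direct Parseval estimate: for an extension $f_e\in L^1$ with $\int(1+|\xi|)^s|\hat f_e|d\xi<\infty$ (hence $\hat f_e\in L^1$) and admissible $h$, Fubini (justified by $\|\hat f_e\|_1\|h\|_{L^1(\Omega)}<\infty$) gives $\langle f,h\rangle_{L^2(\Omega)}=\int f_e\overline{h}=\int\hat f_e\,\overline{\hat h}$, whence $|\langle f,h\rangle|\le\int(1+|\xi|)^s|\hat f_e|d\xi$; supremizing over $h$ and infimizing over $f_e$ yields $D(f)\le B$.

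The heart of the proof is $B\le D(f)$, which I would obtain from the duality for norm-minimal solutions of the bounded operator $T:A_s\to L^2(\Omega)$, $Tu=u|_\Omega$, where $A_s=\{u:\int(1+|\xi|)^s|\hat u|\,d\xi<\infty\}$ is the weighted Beurling-type algebra (isometric via $u\mapsto\hat u$ to $L^1(\mathbb{R}^d,(1+|\xi|)^s d\xi)$). One computes $\|T^*v\|_{A_s^*}=\|(1+|\cdot|)^{-s}\hat v\|_\infty$, so the dual feasibility condition is again $|\hat v(\xi)|\le(1+|\xi|)^s$, and the minimal-norm duality reads $\inf\{\|u\|_{A_s}:u|_\Omega=f\}=\sup\{\operatorname{Re}\langle f,v\rangle:|\hat v|\le(1+|\xi|)^s\}=D(f)$. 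To license this I must know $f\in\operatorname{Im}T$, i.e.\ that \emph{some} finite-cost extension exists; this I would supply by cutting off an integral representation $f=\int(1+|\xi|)^{-s}e^{2\pi\iu\xi\cdot x}d\mu$ (furnished by Lemma~\ref{prokhorov-lemma} after compactifying the frequency parameter) by a $\psi\in C_c^\infty$ with $\psi\equiv1$ on $\Omega$: then $\psi\check\mu\in A_s$ with $\psi\check\mu|_\Omega=f$, a lossy constant being harmless here since only finiteness is needed.

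The main obstacle I anticipate is the final, delicate step of passing from $A_s$ extensions to genuine $L^1$ extensions \emph{without losing a constant}, which is exactly what exact equality demands. A naive cutoff of the measure representation costs a factor $\int(1+|\zeta|)^s|\hat\psi(\zeta)|d\zeta>1$ that no choice of $\psi\equiv1$ on $\Omega$ can remove (a positive-definiteness obstruction), so the raw convex-hull/measure route yields only equivalence. The resolution is to cut off instead the \emph{near-optimal} extension $u\in A_s$ provided by the duality above, whose spectrum is absolutely continuous, $\hat u\in L^1$: for $\psi_R=\psi(\cdot/R)$ one has $\widehat{u\psi_R}=\hat u*\hat\psi_R\to\hat u$ in $L^1((1+|\xi|)^s d\xi)$ as $R\to\infty$, since convolution against the concentrating (albeit signed) approximate identity $\hat\psi_R$ converges in weighted $L^1$ for honest $L^1$ spectra — precisely the convergence that \emph{fails} in total variation for atomic $\mu$. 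This produces $L^1$ extensions $u\psi_R$ with cost tending to that of $u$, giving $B\le D(f)$ and, combined with the earlier steps, the claimed equality $A=B=D(f)$.
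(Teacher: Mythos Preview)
Your duality framework is elegant and several pieces are correct: the bipolar computation $A=D(f)$ is right, the Parseval estimate $D(f)\le B$ is right, and your final observation---that for a near-optimal $u\in A_s$ with $\hat u\in L^1((1+|\xi|)^s d\xi)$ the dilated cutoffs $u\psi_R$ converge in weighted cost because signed approximate identities act well on honest $L^1$ spectra---is a genuinely nice alternative to the paper's delicate Gaussian cutoff Lemma~\ref{fourier-cutoff-lemma}.

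The gap is in the step you treat as routine: the ``minimal-norm duality'' $\inf\{\|u\|_{A_s}:u|_\Omega=f\}=D(f)$. Hahn--Banach on the quotient $A_s/\ker T$ gives
\[
\inf\{\|u\|_{A_s}:Tu=f\}=\sup\bigl\{|\langle u_0,x^*\rangle|: x^*\in(\ker T)^\perp,\ \|x^*\|_{A_s^*}\le 1\bigr\},
\]
and $(\ker T)^\perp=\overline{\operatorname{Im}T^*}^{w^*}$, \emph{not} $\operatorname{Im}T^*$. Since $A_s\cong L^1((1+|\xi|)^sd\xi)$ is non-reflexive, $(\ker T)^\perp$ is strictly larger: it consists of all $h\in L^\infty$ with $((1+|\cdot|)^s\bar h)^\vee$ supported in $\overline\Omega$ as a tempered distribution, whereas $\operatorname{Im}T^*$ corresponds only to those for which this inverse transform lies in $L^2(\Omega)$. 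Your $D(f)$ sups only over the latter. Knowing $f\in\operatorname{Im}T$ does not close this gap; what you would need is that $T(B_{A_s})$ is closed in $L^2(\Omega)$ (so that Hahn--Banach separation yields an $L^2$ separating functional), and establishing that closedness is precisely the substantive content the paper supplies---via Prokhorov compactness of $\overline{\mathbb{F}_s}$ and Lemma~\ref{prokhorov-lemma} when $s>0$, and via the separate Riesz--Markov argument when $s=0$. Relatedly, your ``compactifying the frequency parameter'' works for $s>0$ (the dictionary tends to $0$ at infinity) but fails for $s=0$, where $\mathbb{F}_0$ is not precompact in $L^2(\Omega)$; the paper needs, and provides, a distinct argument there.

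In short, the route is different and the endgame cutoff is a real improvement in spirit, but the crucial inequality $B\le A$ has been relocated into an unproved strong-duality claim whose justification is essentially the paper's own argument.
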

Note that we have equality in the above theorem, not just equivalence of the norms.
We remark that throughout this section, we use the following convention for the Fourier transform
\begin{equation}
 \hat{f}(\xi) = \int_{\mathbb{R}^d} f(x)e^{-2\pi \iu \xi\cdot x}dx,
\end{equation}
for which the inverse transform is given by
\begin{equation}\label{inverse-transform}
 f(x) = \int_{\mathbb{R}^d} \hat{f}(\xi)e^{2\pi \iu    \xi\cdot x}d\xi.
\end{equation}
To prove Theorem \ref{spectral-barron-theorem} we will need the following technical lemma concerning cutoff functions.

\begin{lemma}\label{fourier-cutoff-lemma}
  Suppose that $\Omega\subset \mathbb{R}^d$ is bounded. Let $\epsilon > 0$ and $s\geq 0$. Then there exists a function $\phi\in L^1(\mathbb{R}^d)$, such that $\phi(x) = 1$ for $x\in \Omega$ and 
  \begin{equation}
  \int_{\mathbb{R}^d}(1+|\xi|)^s|\hat{\phi}(\xi)|d\xi \leq 1 + \epsilon.
  \end{equation}
 \end{lemma}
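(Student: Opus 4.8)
The plan is to construct $\phi$ explicitly, guided by the sharp lower bound that the statement is secretly approaching. For any admissible $\phi$ and any fixed $x_0\in\Omega$, the inversion formula \eqref{inverse-transform} gives $1=\phi(x_0)=\int_{\mathbb{R}^d}\hat\phi(\xi)e^{2\pi\iu\xi\cdot x_0}d\xi$, so that $\int_{\mathbb{R}^d}(1+|\xi|)^s|\hat\phi(\xi)|d\xi\ge\int_{\mathbb{R}^d}|\hat\phi(\xi)|d\xi\ge1$. Thus the lemma asks us to come within $\epsilon$ of this optimal value $1$. The obstruction is that the value $1$ cannot be attained: if $\hat\phi\ge0$ then $\phi$ is positive definite, and a nonzero positive-definite $L^1$ function cannot be flat and equal to its maximum on a whole neighborhood. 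This forces a two-piece construction: a positive-definite piece carrying almost all the mass, plus a small correction that pins the value to exactly $1$ on $\Omega$.

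Since $\Omega$ is bounded, I would fix $r_0$ with $\Omega\subset B_{r_0}$ and a fixed $w\in C_c^\infty(\mathbb{R}^d)$ with $w\equiv1$ on $B_{r_0}$. For the positive-definite piece I would use the tensorized Fej\'er kernel $\psi_\beta(x)=\prod_{i=1}^d \mathrm{sinc}^2(\beta x_i)$, where $\mathrm{sinc}(t)=\sin(\pi t)/(\pi t)$. Its Fourier transform is $\hat\psi_\beta(\xi)=\prod_{i=1}^d \beta^{-1}\Lambda(\xi_i/\beta)$ with $\Lambda(t)=\max(1-|t|,0)$; this is nonnegative, supported in $[-\beta,\beta]^d\subset\{|\xi|\le\sqrt{d}\,\beta\}$, and satisfies $\int_{\mathbb{R}^d}\hat\psi_\beta=\psi_\beta(0)=1$, so $\int|\hat\psi_\beta|=1$ exactly. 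I then set $g_\beta=\psi_\beta+(1-\psi_\beta)w$ and take $\phi=g_\beta$ for $\beta$ small. By construction $g_\beta\equiv1$ on $B_{r_0}\supseteq\Omega$ (because $w\equiv1$ there), and $g_\beta\in L^1(\mathbb{R}^d)$ since $\psi_\beta\in L^1$ while $(1-\psi_\beta)w$ is compactly supported.

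It remains to estimate $\int_{\mathbb{R}^d}(1+|\xi|)^s|\hat g_\beta|\,d\xi$. Writing $\hat g_\beta=\hat\psi_\beta+\widehat{(1-\psi_\beta)w}$ and using $\hat\psi_\beta\ge0$, the triangle inequality splits the integral into two terms. For the first, $(1+|\xi|)^s\le(1+\sqrt{d}\,\beta)^s$ on $\mathrm{supp}\,\hat\psi_\beta$, so $\int(1+|\xi|)^s\hat\psi_\beta\le(1+\sqrt{d}\,\beta)^s\to1$ as $\beta\to0$. For the correction term, the key point is that $t\mapsto\mathrm{sinc}^2(t)$ is smooth with all derivatives bounded, so each derivative of $\mathrm{sinc}^2(\beta\,\cdot)$ of positive order carries a factor of $\beta$; hence $\psi_\beta\to1$ in $C^\infty$ on compact sets, and $(1-\psi_\beta)w\to0$ in every $C^m$ norm while remaining supported in the fixed compact set $\mathrm{supp}\,w$. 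The standard decay estimate for a compactly supported smooth $u$, namely $\int(1+|\xi|)^s|\hat u(\xi)|\,d\xi\le C_{w,s,d}\max_{|\alpha|\le m}\|\partial^\alpha u\|_{\infty}$ with $m=\lceil s\rceil+d+1$ (from $|\xi^\alpha\hat u(\xi)|=|\widehat{\partial^\alpha u}(\xi)|\le\|\partial^\alpha u\|_{L^1}$ and integrating the resulting $(1+|\xi|)^{-m}$ weight), then shows the correction term tends to $0$. Combining, $\int(1+|\xi|)^s|\hat g_\beta|\le(1+\sqrt{d}\,\beta)^s+o_\beta(1)\to1$, so choosing $\beta$ small makes it at most $1+\epsilon$. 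The main obstacle is exactly this correction term: one must check that pinning $g_\beta$ to the \emph{exact} value $1$ on $\Omega$ costs arbitrarily little in the weighted $\mathcal{F}L^1$ norm, which is precisely what the $C^\infty_{loc}$ convergence $\psi_\beta\to1$ delivers; the positive-definite piece is routine once the Fej\'er kernel is in hand.
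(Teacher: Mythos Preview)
Your argument is correct. Both your proof and the paper's share the same two-piece architecture: a positive-definite ``main'' function whose weighted Fourier $L^1$ mass tends to $1$, plus a small smooth compactly supported correction that forces the exact value $1$ on $\Omega$ and contributes $o(1)$ to the integral. The differences are in implementation. The paper first reduces to one dimension via a tensor product and the sub-multiplicativity $(1+|\xi|)^s\le\prod_i(1+|\xi_i|)^s$, then takes the main piece to be a wide Gaussian $g_R(x)=e^{-x^2/(2R)}$, using the explicit Gaussian Fourier transform to send the weighted integral to $1$; the correction $\tau_R=1-g_R$ on $[-L,L]$ is extended by a polynomial-times-cutoff so that $\|\tau_R\|_{L^1}$ and $\|\tau_R^{(k)}\|_{L^1}$ vanish for some $k>s+2$. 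You instead work directly in $\mathbb{R}^d$ with the tensorized Fej\'er kernel $\psi_\beta=\prod_i\mathrm{sinc}^2(\beta x_i)$, whose Fourier transform is nonnegative and \emph{compactly supported} in $[-\beta,\beta]^d$; this makes the main-term bound the one-line estimate $(1+|\xi|)^s\le(1+\sqrt d\,\beta)^s$ on the support, avoiding both the dimension reduction and the Gaussian computation. Your correction $(1-\psi_\beta)w$ is arguably cleaner than the paper's polynomial extension, since it is supported in the fixed set $\mathrm{supp}\,w$ from the outset and the $C^m$ smallness follows immediately from the $\beta$-scaling of the derivatives of $\mathrm{sinc}^2$. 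In short: same skeleton, but your Fej\'er choice buys compact Fourier support and a tidier correction, while the paper's Gaussian choice buys a classical explicit transform at the cost of an extra 1D reduction step.
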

 \begin{proof}
  Since $\Omega$ is bounded, it suffices to consider the case where $\Omega = [-L,L]^d$ for a sufficiently large $L$. We consider separable $\phi = \phi_1(x_1)\cdots\phi_d(x_d)$, and note that
  \begin{equation}
   \int_{\mathbb{R}^d}(1+|\xi|)^s|\hat{\phi}(\xi)|d\xi \leq \int_{\mathbb{R}^d}\prod_{i=1}^d(1+|\xi_i|)^s|\hat{\phi}_i(\xi_i)|d\xi \leq \prod_{i=1}^d \int_{\mathbb{R}}(1+|\xi|)^s|\hat{\phi}_i(\xi)|d\xi,
  \end{equation}
  and this reduces us to the one-dimensional case where $\Omega = [-L,L]$.
  
  For the one-dimensional case, consider a Gaussian $g_R(x) = e^{-\frac{x^2}{2R}}$. A simple calculation shows that the Fourier transform of the Gaussian is $\hat{g}_R(\xi) = \sqrt{\frac{R}{2\pi}}e^{-\frac{R\xi^2}{2}}$. This implies that
  \begin{equation}
   \lim_{R\rightarrow \infty} \int_{\mathbb{R}}(1+|\xi|)^s|\hat{g}_R(\xi)|d\xi = 1,
  \end{equation}
  and thus by choosing $R$ large enough, we can make this arbitrarily close to $1$.
  
  Now consider $\tau_R\in C^{k}(\mathbb{R})$ for $k > s+2$ such that $\tau_R(x) = 1 - g_R(x)$ for $x\in [-L,L]$. Then we have 
  $$\|\tau_R\|_{L^\infty([-L,L])}, \|\tau_R^\prime\|_{L^\infty([-L,L])}, \cdots, \|\tau_R^{(k)}\|_{L^\infty([-L,L])} \rightarrow 0$$
  as $R\rightarrow \infty$.
 Consequently, it is possible to extend $\tau_R$ to $\mathbb{R}$ so that
 \begin{equation}
  \|\tau_R\|_{L^1(\mathbb{R})}, \|\tau_R^{(k)}\|_{L^1(\mathbb{R})} \rightarrow 0.
 \end{equation}
 as $R\rightarrow \infty$. For instance, for $x > L$ we can take $\tau_R$ to be a polynomial which matches the first $k$ derivatives at $L$ times a fixed smooth cutoff function which is identically $1$ in some neighborhood of $L$ (and similarly at $-L$).
 
 This implies that $\|\hat{\tau}_R(\xi)\|_{L^\infty(\mathbb{R})},\|\xi^{k}\hat{\tau}_R(\xi)\|_{L^\infty(\mathbb{R})}\rightarrow 0$ as $R\rightarrow \infty$. Together, these imply that
 \begin{equation}
  \lim_{R\rightarrow \infty} \int_{\mathbb{R}}(1+|\xi|)^s|\hat{\tau}_R(\xi)|d\xi \rightarrow 0,
 \end{equation}
 since $k-2 > s$.
 
 Finally, set $\phi_R = g_R(x) + \tau_R(x)$. Then clearly $\phi_R = 1$ on $[-L,L]$ and also
 \begin{equation}
  \lim_{R\rightarrow \infty} \int_{\mathbb{R}}(1+|\xi|)^s|\hat{\phi}_R(\xi)|d\xi \leq \lim_{R\rightarrow \infty} \int_{\mathbb{R}}(1+|\xi|)^s|\hat{\tau}_R(\xi)|d\xi + \lim_{R\rightarrow \infty} \int_{\mathbb{R}}(1+|\xi|)^s|\hat{g}_R(\xi)|d\xi = 1.
 \end{equation}
 Choosing $R$ large enough, we obtain the desired result.
\end{proof}

Using this lemma, we now show that integral representations of the form \eqref{integral-representation-def} over the dictionary $\mathbb{F}_s$ are equivalent to the right hand side of \eqref{fourier-integral-condition}.
\begin{proposition}\label{key-fourier-proposition}
 Let $\Omega\subset \mathbb{R}^d$ be a bounded domain and $s \geq 0$. Then
 \begin{equation}\label{barron-norm-form-2}
  \inf\left\{\|\mu\|:~f = \int_{\mathbb{F}_s} i_{\mathbb{F}_s\rightarrow L^2(\Omega)}d\mu\right\} = \inf_{f_e|_{\Omega} = f} \int_{\mathbb{R}^d} (1+|\xi|)^s|\hat{f}_e(\xi)|d\xi.
 \end{equation}

\end{proposition}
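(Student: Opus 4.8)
The plan is to identify the two infima by working in the frequency domain, treating a measure $\mu$ on the dictionary $\mathbb{F}_s$ as a (complex) Borel measure on $\mathbb{R}^d$ via the continuous injective parameterization $\omega\mapsto (1+|\omega|)^{-s}e^{2\pi\iu\omega\cdot x}$. Under this identification an integral representation $f=\int_{\mathbb{F}_s}i_{\mathbb{F}_s\rightarrow L^2(\Omega)}d\mu$ becomes the identity $f(x)=\int_{\mathbb{R}^d}(1+|\omega|)^{-s}e^{2\pi\iu\omega\cdot x}d\mu(\omega)$ in $L^2(\Omega)$, with $\|\mu\|$ the total variation. (Note that since $\mathbb{F}_s$ need not be compact, Lemma \ref{prokhorov-lemma} does not apply and the equality must be argued directly.) I would then prove the two inequalities separately.

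For the inequality $\leq$, take any admissible extension $f_e\in L^1(\mathbb{R}^d)$ with $f_e|_\Omega=f$ and $\int_{\mathbb{R}^d}(1+|\xi|)^s|\hat{f}_e(\xi)|d\xi<\infty$. Fourier inversion \eqref{inverse-transform} gives $f_e(x)=\int_{\mathbb{R}^d}\hat{f}_e(\xi)e^{2\pi\iu\xi\cdot x}d\xi$, and factoring $\hat{f}_e(\xi)=\big[(1+|\xi|)^s\hat{f}_e(\xi)\big](1+|\xi|)^{-s}$ suggests setting $d\mu(\xi)=(1+|\xi|)^s\hat{f}_e(\xi)d\xi$. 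This exhibits $f$ on $\Omega$ as an integral over $\mathbb{F}_s$ with $\|\mu\|=\int_{\mathbb{R}^d}(1+|\xi|)^s|\hat{f}_e(\xi)|d\xi$, so the left-hand side is bounded by this quantity; taking the infimum over extensions gives the claim. The only care needed is to confirm that the pointwise inversion coincides with the Bochner integral in $L^2(\Omega)$, which holds because $(1+|\xi|)^{-s}\hat{f}_e\in L^1$.

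For the reverse inequality, which is the substantive direction, the cutoff Lemma \ref{fourier-cutoff-lemma} enters. Given a representation with measure $\mu$, the canonical candidate $g(x)=\int_{\mathbb{R}^d}(1+|\omega|)^{-s}e^{2\pi\iu\omega\cdot x}d\mu(\omega)$, defined for all $x\in\mathbb{R}^d$, agrees with $f$ on $\Omega$ and is bounded and continuous (the inverse transform of the finite measure $(1+|\omega|)^{-s}d\mu$), but it is generally not in $L^1(\mathbb{R}^d)$, so it is not an admissible extension and its transform is a measure rather than a function. To repair this I take the cutoff $\phi$ from Lemma \ref{fourier-cutoff-lemma} for the given $\epsilon$ and $s$, with $\phi\equiv 1$ on $\Omega$ and $\int_{\mathbb{R}^d}(1+|\xi|)^s|\hat{\phi}(\xi)|d\xi\leq 1+\epsilon$, and set $f_e=\phi g$. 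Then $f_e|_\Omega=f$ and $f_e\in L^1$ since $\phi\in L^1$ and $g\in L^\infty$. Writing $\hat{f}_e=\hat{\phi}*\hat{g}$ with $\hat{g}=(1+|\omega|)^{-s}\mu$, the key estimate is the submultiplicativity $(1+|\xi|)^s\leq(1+|\xi-\eta|)^s(1+|\eta|)^s$, valid for $s\geq 0$ because $1+|\xi|\leq(1+|\xi-\eta|)(1+|\eta|)$, which cancels the weight $(1+|\eta|)^{-s}$; after Fubini this yields $\int_{\mathbb{R}^d}(1+|\xi|)^s|\hat{f}_e(\xi)|d\xi\leq\big(\int_{\mathbb{R}^d}(1+|\zeta|)^s|\hat{\phi}(\zeta)|d\zeta\big)\|\mu\|\leq(1+\epsilon)\|\mu\|$. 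Letting $\epsilon\to 0$ and taking infima gives the right-hand side $\leq$ the left-hand side.

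The main obstacle is precisely the failure of integrability of the canonical extension $g$: the content of the Proposition is not the frequency-domain bookkeeping (essentially an accounting identity) but the need to replace $g$ by an $L^1$ function with the same restriction to $\Omega$ while inflating the weighted $L^1$ norm of its transform by at most a factor $1+\epsilon$. Lemma \ref{fourier-cutoff-lemma} supplies such a multiplier, and the submultiplicativity of the weight $(1+|\cdot|)^s$ under convolution is what keeps the cost controlled; I would take care to justify the Fubini step and the measurability of the convolution of $\hat{\phi}$ against the measure $\hat{g}$.
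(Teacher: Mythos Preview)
Your proposal is correct and follows essentially the same route as the paper: the easy direction sets $d\mu=(1+|\xi|)^s\hat{f}_e(\xi)\,d\xi$ via Fourier inversion, and the substantive direction multiplies the bounded canonical extension by the cutoff $\phi$ of Lemma~\ref{fourier-cutoff-lemma} and controls the weighted $L^1$ norm of $\hat{\phi}*\big((1+|\cdot|)^{-s}\mu\big)$ using the submultiplicativity $(1+|\xi|)^s\le(1+|\xi-\eta|)^s(1+|\eta|)^s$. Your identification of the main obstacle (the canonical extension is bounded but not in $L^1$) and its resolution match the paper's argument exactly.
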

\begin{proof}
 We first prove the inequality
 \begin{equation}
     \inf\left\{\|\mu\|:~f = \int_{\mathbb{F}_s} i_{\mathbb{F}_s\rightarrow L^2(\Omega)}d\mu\right\} \leq \inf_{f_e|_{\Omega} = f} \int_{\mathbb{R}^d} (1+|\xi|)^s|\hat{f}_e(\xi)|d\xi.
 \end{equation}
 If the right hand side is infinite, there is nothing to prove. So let $f_e\in L^1(\mathbb{R}^d)$ be an extension such that
 \begin{equation}
     \int_{\mathbb{R}^d} (1+|\xi|)^s|\hat{f}_e(\xi)|d\xi < \infty.
 \end{equation}
 In particular, this means that $\hat{f}\in L^1(\mathbb{R}^d)$ as well and Fourier inversion holds almost everywhere. So we get 
 \begin{equation}
     f(x) = \int_{\mathbb{R}^d}(1+|\xi|)^{-s}e^{2\pi\iu\xi\cdot x}(1+|\xi|)^s\hat{f}(\xi)d\xi
 \end{equation}
 for almost every $x\in \Omega$.
 Thus, by choosing $\mu = (1+|\xi|)^s\hat{f}(\xi)d\xi$  we get
 \begin{equation}
     f = \int_{\mathbb{F}_s} i_{\mathbb{F}_s\rightarrow L^2(\Omega)}d\mu,
 \end{equation}
 where the Bochner integral in is justified since $\mathbb{F}_s$ is uniformly bounded in $L^2(\Omega)$ and $\|\mu\| < \infty$. The right hand side is then a function in $L^2(\Omega)$ which agrees with $f$ almost everywhere (hence we have equality). Thus we get
 \begin{equation}
  \inf\left\{\|\mu\|:~f = \int_{\mathbb{F}_s} i_{\mathbb{F}_s\rightarrow L^2(\Omega)}d\mu\right\} \leq \inf_{f_e|_{\Omega} = f} \int_{\mathbb{R}^d} (1+|\xi|)^s|\hat{f}_e(\xi)|d\xi.
 \end{equation}

 Now let us prove the reverse inequality. Let $\lambda$ be a regular Borel measure such that the integral on the right hand side of \eqref{barron-norm-form-2} is finite (note this must mean that $\lambda$ has finite mass) and 
 \begin{equation}
  f(x)=\int_{\mathbb{F}_s} i_{\mathbb{F}_s\rightarrow L^2(\Omega)}d\lambda=\int_{\mathbb{R}^d} (1+|\xi|)^{-s}e^{2\pi \iu \xi\cdot x}d\lambda(\xi)
 \end{equation}
 for $x\in \Omega$. Let $\mu = (1+|\xi|)^{-s}\lambda$, so that we have
 \begin{equation}
     f(x)=\int_{\mathbb{R}^d} e^{2\pi \iu \xi\cdot x}d\mu(\xi)
 \end{equation}
 and
 \begin{equation}
     \int_{\mathbb{R}^d}(1+|\nu|)^s d|\mu|(\nu) = \|\lambda\|.
 \end{equation}
 Choose $\epsilon > 0$. By Lemma \ref{fourier-cutoff-lemma} we can find a $\phi\in L^1(\mathbb{R}^d)$ such $\phi|_\Omega = 1$ and $$\int_{\mathbb{R}^d}(1+|\xi|)^s|\hat{\phi}(\xi)|d\xi \leq 1 + \epsilon.$$ 
 We now set
 \begin{equation}
  f_e(x) = \phi(x)\left[\int_{\mathbb{R}^d} e^{2\pi \iu \xi\cdot x}d\mu(\xi)\right]\in L^1(\mathbb{R}^d),
 \end{equation}
 since $\phi\in L^1(\mathbb{R}^d)$ and $\mu$ has finite mass, so the second factor must be bounded.

 Then we have that for $x\in \Omega$,
 \begin{equation}
  f(x) = f(x)\phi(x) = f_e(x),
 \end{equation}
 and $\hat{f}_e = \hat{\phi} * \mu$,
 where the function $\hat{\phi} * \mu$ is given by
 \begin{equation}
  (\hat{\phi} * \mu)(\xi) = \int_{\mathbb{R}^d} \hat{\phi}(\xi - \nu)d\mu(\nu).
 \end{equation}
 We now calculate
 \begin{equation}
  \int_{\mathbb{R}^d}(1+|\xi|)^s|(\hat{\phi} * \mu)(\xi)|d\xi \leq \int_{\mathbb{R}^d}\int_{\mathbb{R}^d}(1+|\xi|)^s |\hat{\phi}(\xi - \nu)|d|\mu|(\nu)d\xi.
 \end{equation}
 Finally, we use the simple inequality $(1+|\xi|)^s \leq (1+|\nu|)^s(1+|\xi - \nu|)^s$ combined with a change of variables, to get
 \begin{equation}
 \begin{split}
 \int_{\mathbb{R}^d}(1+|\xi|)^s|(\hat{\phi} * \mu)(\xi)|d\xi &\leq \left(\int_{\mathbb{R}^d}(1+|\xi|)^s|\hat{\phi}(\xi)|d\xi\right)\left(\int_{\mathbb{R}^d}(1+|\nu|)^s d|\mu|(\nu)\right)\\
 &\leq (1+\epsilon)\left(\int_{\mathbb{R}^d}(1+|\nu|)^s d|\mu|(\nu)\right) = (1+\epsilon)\|\lambda\|.
 \end{split}
 \end{equation}
 This shows that
 \begin{equation}
  \inf_{f_e|_{\Omega} = f} \int_{\mathbb{R}^d} (1+|\xi|)^s|\hat{f}_e(\xi)|d\xi \leq (1+\epsilon)\inf\left\{\|\mu\|:~f = \int_{\mathbb{F}_s} i_{\mathbb{F}_s\rightarrow L^2(\Omega)}d\mu\right\}.
 \end{equation}
 Since $\epsilon > 0$ was arbitrary, we get the desired result.
\end{proof}

This completes the proof of Theorem \ref{spectral-barron-theorem} if $s > 0$ since then $\mathbb{F}_s$ is compact in $L^2(\Omega)$ and we can invoke Lemma \ref{prokhorov-lemma} to obtain the equality
\begin{equation}
    \|f\|_{\mathbb{F}_s} = \int_{\mathbb{F}_s} i_{\mathbb{F}_s\rightarrow L^2(\Omega)}d\mu
\end{equation}
 The final step is thus to prove the left equality in \ref{fourier-integral-condition} when $s=0$. For this, we use the following.

\begin{proposition}
 Let $\Omega\subset \mathbb{R}^d$ be a bounded domain. Then
 \begin{equation}\label{spectral-barron-integral-condition}
  B_e(\Omega) = \left\{f:\Omega\rightarrow \mathbb{R}:~\inf_{f_e|_\Omega = f} \int_{\mathbb{R}^d} |\hat{f}_e(\xi)|d\xi\leq 1\right\}
\end{equation}
is closed in $L^2(\Omega)$.
\end{proposition}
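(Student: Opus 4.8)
The plan is to prove sequential closedness directly: given $f_n \in B_e(\Omega)$ with $f_n \to f$ in $L^2(\Omega)$, I would construct an extension witnessing $f \in B_e(\Omega)$. The first step is to reformulate membership in $B_e(\Omega)$ in terms of measures using Proposition \ref{key-fourier-proposition} with $s=0$. Since $\inf_{f_e|_\Omega = f_n}\int_{\mathbb{R}^d}|\hat{f}_e(\xi)|d\xi \leq 1$, the proposition guarantees that for each $n$ there is a finite (complex) Borel measure $\mu_n$ on $\mathbb{R}^d$ with total variation $\|\mu_n\| \leq 1 + 1/n$ such that $f_n(x) = \int_{\mathbb{R}^d} e^{2\pi\iu\xi\cdot x}\,d\mu_n(\xi)$ for $x \in \Omega$.

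Next I would pass to a limit measure. The $\mu_n$ are uniformly bounded in $M(\mathbb{R}^d) = C_0(\mathbb{R}^d)^*$, so by Banach--Alaoglu together with the separability of $C_0(\mathbb{R}^d)$, a subsequence $\mu_{n_k}$ converges weak-$*$ to some $\mu \in M(\mathbb{R}^d)$. By weak-$*$ lower semicontinuity of the total variation norm, $\|\mu\| \leq \liminf_k \|\mu_{n_k}\| \leq 1$. I then define $g(x) = \int_{\mathbb{R}^d} e^{2\pi\iu\xi\cdot x}\,d\mu(\xi)$, a bounded function on $\Omega$ with $\|g\|_{L^\infty(\Omega)} \leq \|\mu\| \leq 1$.

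The crux is to show $g = f$ a.e.\ on $\Omega$, and here lies the main obstacle: weak-$*$ convergence only controls integration against $C_0(\mathbb{R}^d)$, whereas the test functions $\xi \mapsto e^{2\pi\iu\xi\cdot x}$ in the representation do not vanish at infinity, so one cannot simply pass to the limit pointwise in $x$. The resolution is to test against $L^2(\Omega)$ instead. For any $\psi \in L^2(\Omega)$, extended by zero to $\mathbb{R}^d$, boundedness of $\Omega$ gives $\psi \in L^1(\mathbb{R}^d)$, so Fubini yields $\int_\Omega f_{n_k}\overline{\psi}\,dx = \int_{\mathbb{R}^d}\overline{\hat{\psi}(\xi)}\,d\mu_{n_k}(\xi)$, and the key point is that $\hat{\psi} \in C_0(\mathbb{R}^d)$ by the Riemann--Lebesgue lemma. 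Hence weak-$*$ convergence applies, and the right-hand side converges to $\int_{\mathbb{R}^d}\overline{\hat{\psi}}\,d\mu = \int_\Omega g\overline{\psi}\,dx$ (again by Fubini). Since $f_{n_k} \to f$ in $L^2(\Omega)$ forces $\int_\Omega f_{n_k}\overline{\psi} \to \int_\Omega f\overline{\psi}$, I conclude $\int_\Omega f\overline{\psi} = \int_\Omega g\overline{\psi}$ for all $\psi \in L^2(\Omega)$, whence $f = g$ a.e.

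Finally, $f = g$ admits the representation $f(x) = \int_{\mathbb{R}^d} e^{2\pi\iu\xi\cdot x}\,d\mu(\xi)$ with $\|\mu\| \leq 1$, so the left-hand infimum in Proposition \ref{key-fourier-proposition} (for $s=0$) is at most $1$; the proposition then gives $\inf_{f_e|_\Omega = f}\int_{\mathbb{R}^d}|\hat{f}_e(\xi)|d\xi \leq 1$, i.e.\ $f \in B_e(\Omega)$. The only remaining routine checks are the two Fubini justifications (immediate from $\psi \in L^1(\Omega)$ and $\|\mu_n\|,\|\mu\| < \infty$) and the measure reformulation of the $s=0$ case of Proposition \ref{key-fourier-proposition}, which identifies finite measures on the dictionary $\mathbb{F}_0$ with finite measures on $\mathbb{R}^d$ preserving total variation.
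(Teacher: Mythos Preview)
Your proof is correct and follows the same overall strategy as the paper: represent each $f_n$ by an object of total variation close to $1$, pass to a limiting measure $\mu$ using the duality $M(\mathbb{R}^d)=C_0(\mathbb{R}^d)^*$, identify $f$ with $\int e^{2\pi\iu\xi\cdot x}\,d\mu$ by pairing against test functions whose Fourier transforms lie in $C_0(\mathbb{R}^d)$, and invoke Proposition~\ref{key-fourier-proposition}.

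The one noteworthy difference is how the limit measure is produced. The paper keeps the representing objects as $L^1$ functions $h_n=\hat f_{n,e}$, defines the limit only as a linear functional on the subspace $\{\hat g:g\in C_c^\infty(\Omega)\}\subset C_0(\mathbb{R}^d)$, and then applies Hahn--Banach followed by Riesz--Markov to realize it as a measure. You instead convert to measures $\mu_n$ up front via Proposition~\ref{key-fourier-proposition} and extract $\mu$ directly by Banach--Alaoglu weak-$*$ compactness. Your route is slightly more streamlined and avoids the extension step; the paper's route never needs to invoke the measure-valued direction of Proposition~\ref{key-fourier-proposition} for the $f_n$. Both are standard and essentially equivalent in difficulty.
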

\begin{proof}
 Let $f_n\rightarrow f$ in $L^2(\Omega)$ with $f_n\in B_e(\Omega)$. Choose $\epsilon > 0$ and consider the corresponding sequence of $h_n = \hat{f}_{n,e}$ in \eqref{spectral-barron-integral-condition} which satisfy
 \begin{equation}\label{eq-870}
  \int_{\mathbb{R}^d}|h_n(\xi)|d\xi \leq 1 + \epsilon,~f_n(x)=\hat{h}_n(x) = \int_{\mathbb{R}^d} h_n(\xi)e^{2\pi \iu \xi\cdot x}d\xi.
 \end{equation}
 By assumption $f_n\rightarrow f$ in $L^2(\Omega)$ so that for any $g\in L^2(\Omega)$, we have
 \begin{equation}
  \langle f_n, g\rangle_{L^2(\Omega)} \rightarrow \langle f, g\rangle_{L^2(\Omega)}.
 \end{equation}
 Choose $g$ to be any element in the dense subset $C^\infty_c(\Omega)\subset L^2(\Omega)$ and note that in this case we have by Plancherel's theorem
 \begin{equation}\label{eq-878}
  \langle f_n, g\rangle_{L^2(\Omega)} = \langle f_n, g\rangle_{L^2(\mathbb{R}^d)} = \langle h_n, \hat{g}\rangle_{L^2(\mathbb{R}^d)}.
 \end{equation}
 Note that $\hat{g}$ is a Schwartz function and so is in $C_0(\mathbb{R}^d)$, the space of continuous, decaying functions
 \begin{equation}
  C_{0}(\mathbb{R}^d) = \{\phi\in C(\mathbb{R}):\lim_{\xi\rightarrow \infty} |\phi(\xi)| = 0\},
 \end{equation}
 with the supremum norm.
 
 This implies that the map
 \begin{equation}
  h:\phi \rightarrow \lim_{n\rightarrow \infty}\langle h_n, \phi\rangle_{L^2(\mathbb{R}^d)} 
 \end{equation}
 defines a bounded linear functional on the subspace of $C_{0}(\mathbb{R}^d)$ which is spanned by $\{\hat{g}:g\in C^\infty_c(\Omega)\}$. The limit above exists by \eqref{eq-878} and the assumption that $f_n\rightarrow f$. Further, the bound has norm $\leq 1 + \epsilon$ by equation \eqref{eq-870}.
 
 By the Hahn-Banach theorem, we can extend $h$ to an element $\mu\in C^*_{0}(\mathbb{R}^d)$, such that $\|\mu\|_{C^*_{0}(\mathbb{R}^d)}\leq 1 + \epsilon$. By the Riesz-Markov theorem (Theorem 22 in \cite{markoff1938mean}), the dual space $C^*_{0}(\mathbb{R}^d)$ is exactly the space of Borel measures with the total variation norm. Thus we get
 \begin{equation}
  \|\mu\|_{C^*_{0}(\mathbb{R}^d)} = \int_{\mathbb{R}^d} d|\mu|(\xi) \leq 1 + \epsilon.
 \end{equation}
 But we also have that for every $g\in C^\infty_c(\Omega)$, $\langle \mu, \hat{g}\rangle = \langle f,g\rangle$. Taking the Fourier transform, we see that the function
 \begin{equation}
  f_\mu = \int_{\mathbb{R}^d}e^{2\pi \iu  \xi\cdot x}d\mu(\xi)
 \end{equation}
 satisfies $\langle f_\mu, g\rangle = \langle f,g\rangle$ for all $g\in C^\infty_c(\Omega)$. Thus $f = f_\mu$ in $L^2(\Omega)$ and so by \eqref{barron-norm-form-2}, we have 
 $$\inf_{f_e|_\Omega = f} \int_{\mathbb{R}^d} |\hat{f}_e(\xi)|d\xi \leq \int_{\mathbb{R}^d} |\hat{f}_{\mu}(\xi)|d\xi \leq 1 + \epsilon.$$ 
 Since $\epsilon$ was arbitrary, this completes the proof.

\end{proof}
To complete the proof in the case of $s=0$, we simply note that by \eqref{barron-norm-form-2}, $B_e(\Omega)$ contains all of the complex exponentials $e^{2\pi i\omega\cdot x}$. Since it is clearly convex and is closed by Proposition \ref{key-fourier-proposition}, it must be equal to $\overline{\conv(\pm\mathbb{F}_0)}$. This completes the proof of Theorem \ref{spectral-barron-theorem}.

\section{Conclusion}
We have provided some foundational analysis of the variation spaces with respect to dictionaries arising in the study of shallow neural networks. The precise analysis of approximation theoretic properties such as the metric entropy and $n$-widths of these spaces is a major research direction which we propose. In addition, it must be investigated whether these variation spaces are useful for any particular practical applications.
\section{Acknowledgements}
We would like to thank Professors Russel Caflisch, Ronald DeVore, Weinan E, Albert Cohen, Stephan Wojtowytsch and Jason Klusowski for helpful discussions. We would also like to thank the anonymous reviewers for their helpful comments. This work was supported by the Verne M. Willaman Chair Fund at the Pennsylvania State University, and the National Science Foundation (Grant No. DMS-1819157).

\bibliographystyle{spmpsci}
\bibliography{refs}

\end{document}